\newcolumntype{C}[1]{>{\centering\let\newline\\\arraybackslash\hspace{0pt}}m{#1}}
\newtheorem{theorem}{Theorem}[section]
\newtheorem{proposition}[theorem]{Proposition}
\theoremstyle{definition}
\DeclareMathOperator*{\argmin}{arg\,min}
\title[QC Model with CNN Features for Image Registration]{Quasiconformal Model with CNN Features for Large Deformation Image Registration}
\author[Ho Law, Gary P. T. Choi, Ka Chun Lam and Lok Ming Lui]{}
\email{hlaw@gatech.edu}
\email{ptchoi@mit.edu}
\email{kachun.lam@nih.gov}
\email{lmlui@math.cuhk.edu.hk}
\thanks{This work was supported in part by the National Science Foundation under Grant No.~DMS-2002103 (to Gary P.~T.~Choi), and HKRGC GRF under project ID 14305919 (to Lok Ming Lui).}
\subjclass{65D18, 68U05, 68U10, 68T07}
\keywords{Image registration, convolutional neural networks, quasiconformal theory}
\begin{document}

\maketitle

\centerline{\scshape Ho Law}
\medskip
{\footnotesize
 \centerline{School of Mathematics}
 \centerline{Georgia Institute of Technology, Atlanta, GA 30332, USA}
} 

\medskip

\centerline{\scshape Gary P. T. Choi}
\medskip
{\footnotesize
 \centerline{Department of Mathematics}
 \centerline{Massachusetts Institute of Technology, Cambridge, MA 02139, USA}
}
\medskip

\centerline{\scshape Ka Chun Lam}
\medskip
{\footnotesize
 \centerline{Machine Learning Team}
 \centerline{National Institute of Mental Health, Bethesda, MD 20892, USA}
}
\medskip

\centerline{\scshape Lok Ming Lui}
\medskip
{\footnotesize
 \centerline{Department of Mathematics}
 \centerline{The Chinese University of Hong Kong, Shatin, NT, Hong Kong, China}
}

\bigskip

\begin{abstract}
Image registration has been widely studied over the past several decades, with numerous applications in science, engineering and medicine. Most of the conventional mathematical models for large deformation image registration rely on prescribed landmarks, which usually require tedious manual labeling and are prone to error. In recent years, there has been a surge of interest in the use of machine learning for image registration. In this paper, we develop a novel method for large deformation image registration by a fusion of quasiconformal theory and convolutional neural network (CNN). More specifically, we propose a quasiconformal energy model with a novel fidelity term that incorporates the features extracted using a pre-trained CNN, thereby allowing us to obtain meaningful registration results without any guidance of prescribed landmarks. Moreover, unlike many prior image registration methods, the bijectivity of our method is guaranteed by quasiconformal theory. Experimental results are presented to demonstrate the effectiveness of the proposed method. More broadly, our work sheds light on how rigorous mathematical theories and practical machine learning approaches can be integrated for developing computational methods with improved performance. 
\end{abstract}

\section{Introduction} \label{introduction}
Image registration aims at establishing a meaningful correspondence between two images based on a given metric. Since the 1990s, it has been extremely useful in medical imaging, computer graphics and computer vision. For instance, one can register two medical images of a patient at different time points for disease diagnosis. One can also utilize image registration for creating animations and for tracking objects in different video frames. In particular, it is usually desirable but also more challenging to consider \emph{non-rigid} image registrations, which involve transformations beyond simple translation, rotation and scaling, such as affine maps, diffeomorphic maps, conformal maps and quasiconformal maps.

Conventional mathematical models for non-rigid image registration can be categorized into three major types: landmark-based methods, intensity-based methods, and hybrid methods. Landmark-based methods use prescribed feature points (also known as \emph{landmarks}) to guide the registration, while intensity-based methods use only the intensity of the images to compute the registration. Hybrid methods take the advantages of the two methods by considering both the image intensity and the landmark correspondences, which make them particularly effective for the case where the source image and the target image are assumed to be with a large deformation. However, in practice, manual landmark labeling is time-consuming and easily affected by human errors. 

In this work, we propose a novel variational model for large deformation image registration by combining quasiconformal theory and convolutional neural network (CNN). Specifically, instead of using a landmark-based fidelity term, we propose a novel fidelity term that uses feature vectors obtained from a truncated classification network to guide the registration. As the CNN features do not necessary yield a smooth and bjiective map, quasiconformal energy terms are included in the variational model to reduce quasiconformal distortion and ensure bijectivity. Our experiments show that even only using a common pre-trained classification network, the proposed variational model already works very well for registering a large variety of images. 

The rest of the paper is organized as follows. In Section~\ref{sect:contribution}, we highlight the contributions of our work. In Section~\ref{sect:prevwork}, we review the related works on image registration and machine learning. In Section~\ref{sect:background}, we introduce the theory of quasiconformal maps and convolutional neural networks. The proposed model and the implementation are then described in Section~\ref{sect:main} and Section~\ref{sect:implementation} respectively. In Section~\ref{sect:experiment}, we demonstrate the effectiveness of the proposed algorithm using various experiments. We conclude the paper and discuss possible future works in Section~\ref{sect:conclusion}.

\section{Contributions} \label{sect:contribution}
The contributions of our work are as follows:
\begin{enumerate}[(i)] 
    \item Our proposed variational model involves a novel combination of quasiconformal maps and machine learning for large deformation image registration. Specifically, we propose a fidelity term to incorporate the features extracted using a pre-trained classification CNN into our quasiconformal energy model.
    \item Unlike many prior image registration models, our method does not rely on any prescribed landmarks. 
    \item Even only using a common pre-trained classification network for getting the CNN feature vectors, our proposed model is capable of yielding accurate registration results.
    \item The bijectivity of the proposed method is theoretically guaranteed by quasiconformal theory.
    \item Our work demonstrates how machine learning approaches and mathematical theories can be integrated for the development of useful computational methods.
\end{enumerate}
\section{Related works}\label{sect:prevwork}
\subsection{Non-rigid image registration}
Over the past several decades, image registration has been extensively studied (see~\cite{brown1992survey,maintz1998survey,zitova2003image,sotiras2013deformable} for detailed surveys). In~\cite{opticalflow}, Horn and Schunck presented a method for registering images using optical flow. In~\cite{johnson2002consistent}, Johnson and Christensen proposed a method for consistent landmark and intensity-based image registration using thin-plate spline (TPS)~\cite{bookstein1989principal}. In~\cite{lddmm}, Beg \emph{et~al.} proposed the large deformation diffeomorphic metric mapping (LDDMM) algorithm for image warping. Joshi and Miller~\cite{joshi2000landmark} used large deformation diffeomorphisms for matching landmarks. Other publicly available image registration tools include Elastix~\cite{klein2009elastix}, deformable image registration using discrete optimization (DROP)~\cite{glocker2011deformable}, flexible algorithms for image registration (FAIR)~\cite{modersitzki2009fair} etc. 

Diffeomorphic Demons (DDemons), developed by Vercauteren \emph{et~al.}~\cite{demons}, is a well-known non-parametric diffeomorphic image registration method stemming from the work of Thirion~\cite{thiriondemon}. By combining DDemons and quasiconformal theory, Lam and Lui~\cite{QCHR} proposed the quasiconformal hybrid registration (QCHR) method that reduces the local geometric distortion of the registration map. The QCHR method has been successfully applied to different registration and shape analysis tasks~\cite{yung2018efficient,choi2020shape}. However, in case manual landmark labeling is not available and the main features in the input images do not overlap, these methods may not work well. More specifically, if there is little or no overlap of the features in the source image and the target image, these methods will only shrink the features and yield a meaningless registration map. Therefore, these methods can only handle large deformation image registration with the presence of prescribed landmarks.

\subsection{Imaging and machine learning}
In recent years, the use of neural networks has become increasingly popular for imaging and computer vision~\cite{ptdescriptor,pnnet,matchnet,localdescriptor,comparepatches,jia2021regularized}. In particular, there have been a number of works that use deep learning framework for image and surface registration, such as DLIR~\cite{bob},  VoxelMorph~\cite{voxelmorph}, FAIM~\cite{faim}, Quicksilver~\cite{yang2017quicksilver} and cycle-consistent training~\cite{kuang2019cycle}. These methods usually require carefully designing the neural network architecture and performing extensive training for achieving the registration. By contrast, the goal of our work is to develop an energy model for image registration without having to build any registration network or perform any training. Our method only uses feature vectors extracted by a pre-trained classification network and solves an energy minimization problem to obtain the registration.

\section{Theoretical background}\label{sect:background}
\subsection{Quasiconformal theory}
In this work, we use quasiconformal maps to obtain diffeomorphic image registrations with large deformations. In this section, we describe some related concepts in quasiconformal theory. Readers are referred to~\cite{Gardiner,Lehto} for more details.

\begin{figure}[t]
\centering
\includegraphics[width=0.9\textwidth]{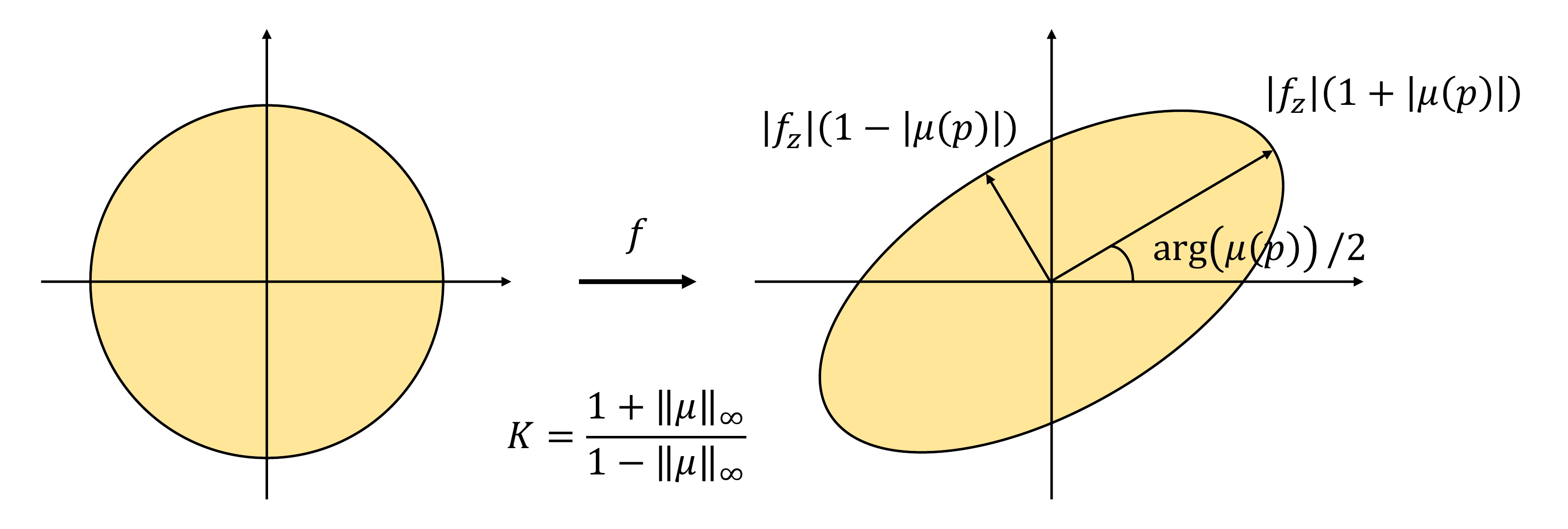}
\caption{An illustration of how the Beltrami coefficient $\mu$ determines the conformality distortion. Under a quasiconformal map $f$, an infinitesimal circle around a point $p$ is mapped to an infinitesimal ellipse centered at $f(p)$, where the major axis length and the minor axis length are given by $|f_z(p)|(1+|\mu(p)|)$ and $|f_z(p)|(1-|\mu(p)|)$. Therefore, the maximal dilation of $f$ is $K(f) = \frac{1+||\mu||_{\infty}}{1-||\mu||_{\infty}}$. Also, the orientation change of the major axis of the ellipse is given by $\arg(\mu(p))/2$.}
\label{fig:illustration1}
\end{figure}

Let $f:\mathbb{C} \to \mathbb{C}$ be a holomorphic function on the complex plane with $f(z) = f(x,y) = u(x,y) + i v(x,y)$, where $z = x + iy$, $u,v$ are real-valued functions, and its derivative is non-zero everywhere. $f$ is said to be \emph{conformal} if it satisfies the Cauchy--Riemann equations
\begin{equation}\label{eqt:cauchyriemann}
\frac{\partial u}{\partial x} = \frac{\partial v}{\partial y} \ \ \text{ and } \ \ 
    \frac{\partial u}{\partial y} = -\frac{\partial v}{\partial x}. 
\end{equation}
If we denote 
\begin{equation}
    \frac{\partial f}{\partial \overline{z}} = \frac{1}{2}\left(\frac{\partial f}{\partial x} + i\frac{\partial f}{\partial y}\right) \ \ \text{ and } \ \ \frac{\partial f}{\partial z} = \frac{1}{2}\left(\frac{\partial f}{\partial x} - i\frac{\partial f}{\partial y}\right),
\end{equation}
then Equation~\eqref{eqt:cauchyriemann} can be rewritten as
\begin{equation}
    \frac{\partial f}{\partial \overline{z}} = 0.
\end{equation}

Quasiconformal maps are a generalization of conformal maps. Intuitively, the first order approximations of conformal maps take small circles to small circles, while the first order approximations of quasiconformal maps take small circles to small ellipses of bounded eccentricity~\cite{Gardiner}. Mathematically, an orientation-preserving homeomorphism $f:\mathbb{C} \to \mathbb{C}$ is said to be \emph{quasiconformal} if it satisfies the Beltrami equation:
\begin{equation}\label{beltramieqt}
\frac{\partial f}{\partial \overline{z}} = \mu(z) \frac{\partial f}{\partial z}
\end{equation}
for some complex-valued function $\mu$ (called the \emph{Beltrami coefficient}) with $||\mu||_{\infty}< 1$. Note that $\mu$ measures how far the map at each point is deviated from a conformal map. In particular, the map $f$ is conformal around a small neighborhood of $p$ if and only if $\mu(p) = 0$. One may express $f$ around a point $p$ with respect to its local parameter as follows (see Fig.~\ref{fig:illustration1}):
\begin{equation}
\begin{split}
f(z) & = f(p) + f_{z}(p)z + f_{\overline{z}}(p)\overline{z} = f(p) + f_{z}(p)(z + \mu(p)\overline{z}).
\end{split}
\end{equation}
In other words, locally $f$ can be considered as a map composed of a translation by $f(p)$ together with a stretch map $S(z)=z + \mu(p)\overline{z}$ with a multiplication of $f_z(p)$. Because of the factor $\mu(p)$ in $S(z)$, $f$ maps a small circle to a small ellipse. More specifically, the maximal magnification factor is $|f_z(p)|(1+|\mu(p)|)$ and the maximal shrinkage factor is $|f_z(p)|(1-|\mu(p)|)$. The maximal dilation of $f$ is then given by
\begin{equation}
K(f) = \frac{1+||\mu||_{\infty}}{1-||\mu||_{\infty}}.
\end{equation}
Also, the orientation change of the major axis of the ellipse is given by $\arg(\mu(p))/2$. This shows that the Beltrami coefficient $\mu$ provides us with useful information of the quasiconformality of the mapping $f$.

Besides, given a Beltrami coefficient $\mu:\mathbb{C}\to \mathbb{C}$ with $\|\mu\|_\infty < 1$, there is always a quasiconformal mapping from $\mathbb{C}$ onto itself which satisfies the Beltrami equation~\eqref{beltramieqt} in the distribution sense~\cite{Gardiner}. More precisely, we have the following theorem:

\begin{theorem}[Measurable Riemann Mapping Theorem] \label{thm:Beltrami}
Suppose $\mu: \overline{\mathbb{C}} \to \overline{\mathbb{C}}$ is Lebesgue measurable and satisfies $\|\mu\|_\infty < 1$. Then there exists a quasiconformal homeomorphism $\phi$ from $\overline{\mathbb{C}}$ onto itself, which is in the Sobolev space $W^{1,2}(\overline{\mathbb{C}})$ and satisfies the Beltrami equation~\eqref{beltramieqt} in the distribution sense. Furthermore, by fixing 0, 1 and $\infty$, the quasiconformal homeomorphism $\phi$ is uniquely determined for any given $\mu$.
\end{theorem}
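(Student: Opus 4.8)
The plan is to recast the Beltrami equation as a fixed-point problem for a singular integral operator and solve it by a Neumann series, following the classical Ahlfors--Bers approach. The two analytic tools I would introduce are the solid Cauchy transform $P$ and the Beurling transform $S$, defined for suitable $g$ by
\begin{equation}
(Pg)(z) = -\frac{1}{\pi}\int_{\mathbb{C}} \frac{g(\zeta)}{z-\zeta}\,dA(\zeta), \qquad (Sg)(z) = -\frac{1}{\pi}\,\mathrm{p.v.}\!\int_{\mathbb{C}} \frac{g(\zeta)}{(z-\zeta)^2}\,dA(\zeta).
\end{equation}
The key facts I would record are that $\partial_{\overline{z}}(Pg) = g$ and $\partial_z(Pg) = Sg$ in the distributional sense, that $S$ is an $L^2$ isometry, and --- by the Calder\'on--Zygmund theory --- that $S$ extends to a bounded operator on $L^p(\mathbb{C})$ for every $1<p<\infty$ with operator norm $\|S\|_p \to 1$ as $p \to 2$. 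I would first reduce to the case where $\mu$ has compact support (say $\supp \mu \subset \mathbb{D}$), deferring the general case to a gluing argument.

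Seeking a normalized solution of the form $f(z) = z + (Ph)(z)$ with an unknown $h \in L^p$, I would compute $f_{\overline{z}} = h$ and $f_z = 1 + Sh$, so that the Beltrami equation~\eqref{beltramieqt} becomes the integral equation
\begin{equation}
(I - \mu S)\,h = \mu.
\end{equation}
Since $\|\mu\|_\infty = k < 1$, I would fix an exponent $p>2$ close enough to $2$ that $k\,\|S\|_p < 1$; then $I - \mu S$ is invertible on $L^p$ and the Neumann series $h = \sum_{n\ge 0} (\mu S)^n \mu$ converges, producing a solution $h \in L^p$. Setting $f = z + Ph$ then yields $f_z, f_{\overline{z}} \in L^p$, and because $p>2$ the Sobolev embedding $W^{1,p}_{\mathrm{loc}} \hookrightarrow C^{0,\alpha}_{\mathrm{loc}}$ makes $f$ continuous (indeed H\"older); this also places $f$ in $W^{1,2}$ as required.

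To upgrade $f$ to a homeomorphism I would first treat smooth $\mu$, for which classical elliptic regularity shows $f$ is a diffeomorphism with positive Jacobian $J_f = |f_z|^2(1-|\mu|^2) > 0$, and then pass to general measurable $\mu$ by approximating it in measure by smooth Beltrami coefficients with a uniform bound $\|\mu_n\|_\infty \le k$, using the continuous dependence of the Neumann solution on $\mu$ together with a normal-family and degree argument to preserve injectivity in the limit. The case of arbitrary $\mu$ on $\overline{\mathbb{C}}$ I would reduce to the compactly supported case by splitting $\mu = \mu_1 + \mu_2$ with $\mu_1, \mu_2$ supported inside and outside $\mathbb{D}$, solving for each, and composing the two solutions via the composition rule for Beltrami coefficients (post-composing with the change of variable $z \mapsto 1/z$ to move the exterior piece near $0$).

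Finally, for uniqueness I would suppose $f_1$ and $f_2$ are two such homeomorphisms with the same $\mu$. The composition $g = f_1 \circ f_2^{-1}$ has Beltrami coefficient identically $0$ by the chain rule for Beltrami coefficients, so $g$ is a $W^{1,2}$ homeomorphism satisfying $g_{\overline{z}} = 0$ distributionally; Weyl's lemma then forces $g$ to be holomorphic, hence a conformal automorphism of $\overline{\mathbb{C}}$, i.e.\ a M\"obius transformation. Normalizing by $g(0)=0$, $g(1)=1$, $g(\infty)=\infty$ pins $g$ down to the identity, giving $f_1 = f_2$. I expect the main obstacle to lie not in the formal fixed-point argument but in the two analytic inputs that make it rigorous: the sharp $L^p$ mapping bound $\|S\|_p \to 1$ for the Beurling transform (which is what lets the Neumann series converge for every $k<1$), and the passage from a continuous $W^{1,2}$ solution to a genuine homeomorphism, where the approximation and degree-theoretic arguments require the most care.
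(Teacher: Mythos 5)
The paper does not prove this theorem: it is quoted as a classical background result with a citation to the quasiconformal-theory literature (Gardiner, Lehto), so there is no in-paper argument to compare yours against. Your outline is the standard Ahlfors--Bers proof via the Cauchy and Beurling transforms, and as a sketch it is essentially correct: the reduction to compactly supported $\mu$, the integral equation $(I-\mu S)h=\mu$, the Neumann series in $L^p$ for $p>2$ with $k\|S\|_p<1$ (which follows from the $L^2$ isometry plus Riesz--Thorin interpolation), the Sobolev embedding giving H\"older continuity, the smooth-approximation argument for injectivity, and the uniqueness via Weyl's lemma and the three-point normalization of a M\"obius map are all the right ingredients, and you correctly identify where the genuine analytic work lies. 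One step is stated too loosely: writing $\mu=\mu_1+\mu_2$ and ``solving for each, and composing the two solutions'' does not work literally, since the Beltrami coefficient of a composition $f_2\circ f_1$ is not the sum of the coefficients of the factors. The correct reduction is to first solve for $f_1$ with coefficient $\mu\chi_{\mathbb{D}}$, then use the composition formula to compute the coefficient $\tilde\mu_2$ that the second map must have \emph{in the target coordinates of $f_1$} so that $f_2\circ f_1$ realizes $\mu$; that $\tilde\mu_2$ is supported away from a neighborhood of $f_1(\mathbb{D})$, and conjugating by $z\mapsto 1/z$ brings it back to the compactly supported case. Also, in the uniqueness step the chain rule for Beltrami coefficients applied to $f_1\circ f_2^{-1}$ requires the a.e.\ differentiability and Lusin $(N)$/$(N^{-1})$ properties of quasiconformal maps, which deserve explicit mention; with those caveats your argument is the standard one.
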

Theorem~\ref{thm:Beltrami} suggests that under suitable normalization, a homeomorphism from $\mathbb{C}$ onto itself can be uniquely determined by its associated Beltrami coefficient. 

\subsection{Linear Beltrami solver}
The Linear Beltrami solver (LBS)~\cite{QCHR} provides us with an efficient way for reconstructing a quasiconformal map $f(z) = u(x,y) + i v(x,y)$ given a Beltrami coefficient $\mu_f$. First, note that the Beltrami equation~\eqref{beltramieqt} can be rewritten as
\begin{equation}
\mu_f = \frac{(u_x - v_y) + i(v_x + u_y)}{(u_x + v_y) + i(v_x - u_y)}.
\end{equation}
Now, let $\mu_f(z) = \rho(z) + i\tau(z)$ where $\rho, \tau$ are real-valued functions. We can then express $v_x$ and $v_y$ as linear combinations of $u_x$ and $u_y$:
\begin{equation}\label{eqt:linearB1cont}
\begin{split}
-v_y & = \alpha_1 u_x + \alpha_2 u_y;\\
v_x & = \alpha_2 u_x + \alpha_3 u_y,
\end{split}
\end{equation}
where $\alpha_1 = \frac{(\rho -1)^2 + \tau^2}{1-\rho^2 - \tau^2} $, $\alpha_2 = -\frac{2\tau}{1-\rho^2 - \tau^2} $, and $\alpha_3 = \frac{1+2\rho+\rho^2 +\tau^2}{1-\rho^2 - \tau^2} $. Similarly, we can express $u_x$ and $u_y$ as linear combinations of $v_x$ and $v_y$:
\begin{equation} \label{eqt:linearB2cont}
\begin{split}
u_y & = \alpha_1 v_x + \alpha_2 v_y;\\
-u_x & = \alpha_2 v_x + \alpha_3 v_y.
\end{split}
\end{equation}
Since $\nabla \cdot \left(\begin{array}{c}
-v_y\\
v_x \end{array}\right) = 0$ and $\nabla \cdot \left(\begin{array}{c}
u_y\\
-u_x \end{array}\right) = 0$, we have
\begin{equation}\label{eqt:BeltramiPDE}
\nabla \cdot \left(A \left(\begin{array}{c}
u_x\\
u_y \end{array}\right) \right) = 0\ \ \mathrm{and}\ \ \nabla \cdot \left(A \left(\begin{array}{c}
v_x\\
v_y \end{array}\right) \right) = 0,
\end{equation}
where $A = \left( \begin{array}{cc}\alpha_1 & \alpha_2\\
\alpha_2 & \alpha_3 \end{array}\right)$.

In the discrete case, the elliptic PDEs~\eqref{eqt:BeltramiPDE} can be discretized as sparse positive definite linear systems and hence can be efficiently solved. Readers are referred to~\cite{QCHR,choi2015flash} for more details.

\subsection{Convolutional neural networks (CNNs)}
Another important component in our proposed method is the use of features extracted by CNNs. A comprehensive introduction to CNN can be found in~\cite{Goodfellow-et-al-2016}. Here, we introduce the concept of \emph{receptive field} in CNN, which is particularly related to our work.

In the context of neural network for imaging, a receptive field is an area that is being read by the network at a specific layer. More specifically, the network takes in an image pixel by pixel initially and treats each pixel as a 3-dimensional vector for RGB images. Then after certain layers of convolution and pooling, the network starts to read the image region by region and it does so by assigning each region a high dimensional vector. This region is called a receptive field. In general, at the end of a CNN, the size of a receptive field would usually outgrow the size of input image, and different receptive fields overlap each other.

An illustration of receptive field is shown in Fig.~\ref{fig:receptive field}. Suppose the input image is of size $5 \times 5$, kernel of size $3 \times 3$, padding of size $1$ and stride of size $2 \times 2$. After two convolutions, the purple pixels, including the padding, on the rightmost grid contribute to the value of the top left corner of the $2 \times 2$ grid. Thus, these purple pixels in the original input image form a receptive field of that top left value. 

\begin{figure}[t]
\centering
\includegraphics[width=\textwidth]{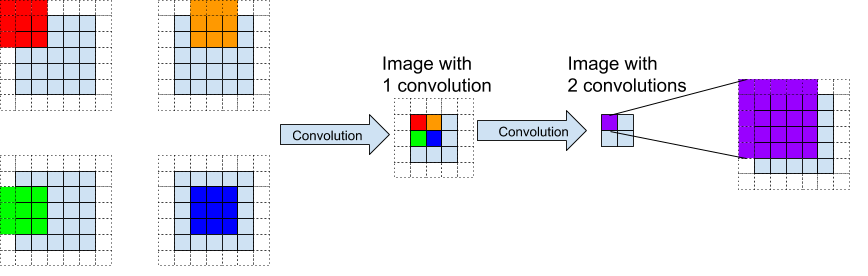}
\caption{An illustration of receptive field.}
\label{fig:receptive field}
\end{figure}

\section{Proposed method}\label{sect:main}
In this section, we describe our proposed quasiconformal energy model with CNN features for large deformation image registration.
  
\subsection{Measuring the correlation of image patches using a pre-trained network and receptive fields}
In~\cite{Rocco17}, Rocco \emph{et~al.} proposed a method for obtaining similarity information from two images using some typical image classification networks. The main idea is to extract a feature vector from each receptive field of an image using a truncated classification network, and then compute the similarities between each receptive field of the source image and the target image using inner products. Since receptive fields are large and usually overlap with each other, this gives a way of roughly evaluating the interaction between patches on a single image. Motivated by this approach, in this work, we propose to partition the images and then evaluate the correlation of the image patches with the aid of a classification network. We will then incorporate the similarity information into a quasiconformal energy model for computing the registration. Unlike the work by Rocco \emph{et~al.}, our approach requires a higher level of preciseness as we will be defining a bijective correspondence between two images. Thus, instead of allowing the receptive field to grow with little control, we partition the images into patches to feed into network, so as to gain back the control of how large and where the network reads for generating a feature vector. 

If we feed an image into a typical classification neural network such as ResNet~\cite{he2016deep}, VGG~\cite{simonyan2014very} or DenseNet~\cite{huang2017densely}, then up to some layer in the middle, we can obtain a high dimension vector for each receptive field and stack them vertically to yield a high dimension vector. Fig.~\ref{fig:feature vectors} illustrates the process of producing such a high dimension vector: feeding an $h \times w $ image through a truncated classification network generates a 3D array of size $m \times n \times d$, where $m,n$ are respectively the number of receptive fields along the width and height of the input image depending on stride, kernel and padding size, and $d$ is the dimension of the feature vector depending on the architecture of the network. Here, we remark that Fig.~\ref{fig:feature vectors} does not reflect the actual architecture of a classification network. The red column in the cuboid represents a feature vector of the red receptive field in the input image, and for each vector we stack them up vertically. Mathematically, this stacking can be viewed as an isomorphism that maps the output array to a vector in $\mathbb{R}^{mnd}$. The inner product of this supreme feature vector of two images is then the sum of the correlation scores of the corresponding pairs of receptive fields. This provides us with a quantitative way for measuring the correlation of two images, which plays an important role in our proposed energy model described in the next section.

\begin{figure}[t]
\centering
\includegraphics[width=0.9\textwidth]{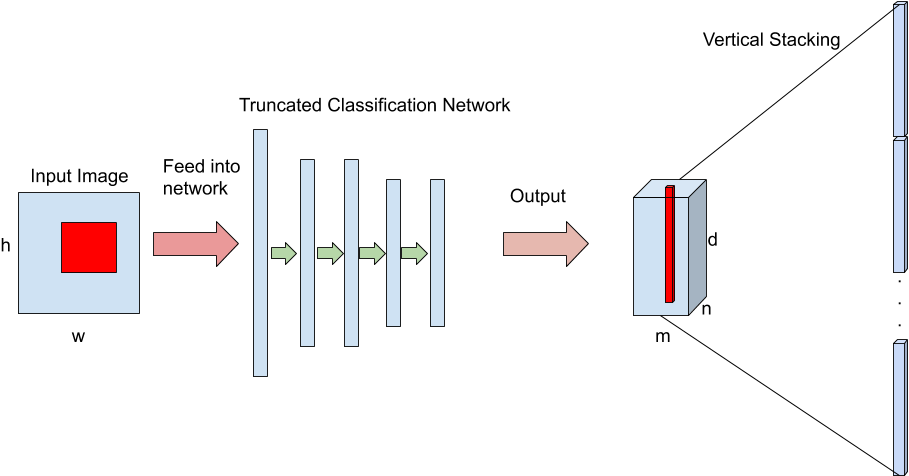}
\caption{An illustration of stacking feature vectors. An image with height $h$ and width $w$ is fed into a truncated classification network, which generates a 3D array of size $m \times n \times d$, where $m,n$ are respectively the number of receptive fields along the width and height of the input image depending on stride, kernel and padding size, and $d$ is the dimension of the feature vector depending on the architecture of the network. the 3D array is then stacked up to form the output feature vector in $\mathbb{R}^{mnd}$.}
\label{fig:feature vectors}
\end{figure}

\subsection{Quasiconformal energy model with CNN features}\label{model}
The classification network gives us a reliable quantitative measurement of the correlation between two images. The next crucial step is to incorporate this useful information into the registration model. Let $\mathcal{M}$ and $\mathcal{S}$ be the moving and static images respectively. To find a diffeomorphism $f:\mathcal{M} \to \mathcal{S}$ such that $\|\mu(f) \|_{\infty} < 1$, we propose the following model:
\begin{equation} \label{raw model}
    f = \argmin_{g:\mathcal{M} \to \mathcal{S}} E_{C}(\mu(g)),
\end{equation}
where
\begin{equation} \label{energy}
    E_{C}(\mu(g)) = \int_{\mathcal{M}} \left(|\nabla \mu |^{2} + \alpha | \mu |^{2} + \beta (I_{\mathcal{M}} - I_{\mathcal{S}}(g))^{2}\right) \mathrm{d}z + \gamma  \|C \otimes D(g) - C \|^{2}_{F}.
\end{equation}
Here, $\alpha,\beta,\gamma$ are some positive real numbers, $I_{\mathcal{M}}$ and $I_{\mathcal{S}}$ denote the intensity of the images, $C$ is an $m\times m$ matrix where $m$ is the number of image patches for each image, $D: \{ g:\mathcal{M} \to \mathcal{S} \} \to [0,1]^{m \times m}$ maps $g$ to an $m \times m$ matrix, and $\otimes$ is the Hadamard product with
\begin{equation}
    (A \otimes B)_{ij} = (A)_{ij} (B)_{ij}
\end{equation}
for any matrices $A, B$ of the same dimension. The key idea of the proposed model is to use the features extracted by a robust CNN to guide the registration process via the last term in the energy~\eqref{energy} while maintaining the diffeomorphic property of the deformation using the first two terms of the integral based on quasiconformal theory.

More specifically, the first term $|\nabla \mu|^2$ of the integral in the energy~\eqref{energy} is used for enhancing the smoothness of $f$. Recall that by Theorem~\ref{thm:Beltrami}, there exists a correspondence between quasiconformal mappings and the Beltrami coefficients, which measure the local geometric distortion of the mappings. Therefore, a smaller $|\nabla \mu|^2$ gives a smoother quasiconformal map $f$. The second term $|\mu|^2$ of the integral is used for reducing the conformality distortion of the registration. As described in Section~\ref{sect:background}, a map is conformal if and only if $\mu = 0$. A smaller $|\mu|^2$ gives a map with a lower local geometric distortion, which is desirable. The third term $(I_{\mathcal{M}} - I_{\mathcal{S}}(g))^{2}$ of the integral aims to reduce the intensity difference of the images. 

The last term $\|C \otimes D(g) - C \|^{2}_{F}$ in the proposed energy~\eqref{energy} is a novel fidelity term that gives a descent direction to guide the registration based on the correlation of different regions of two images, which plays an important role in our proposed method. More specifically, the feature vectors encode important hidden information extracted by a classification CNN from big image datasets, which may not be easily represented using conventional mathematical quantities such as curvature or gradient. With this novel fidelity term, we can utilize such information and combine it with quasiconformal maps for achieving a meaningful registration. One key feature of our proposed model is that it allows us to save the effort of going through the training of a neural network for registration, as there are already many publicly available classification networks well-trained on giving meaningful feature vectors that we can re-use. Another advantage of this approach is the generality: for most of the common images, the CNN should give fairly good guidance for rudimentary correspondence on two images even if it is not trained on a specific type of images. We remark that in case we want to focus on a more specific type of images, it is also possible to use a pre-trained classification network on such type of images to get the feature vectors. Below, we first introduce the definition of $C$ and $D(g)$, and then explain the use of the Hadamard product.

Roughly speaking, the matrix $C$ is used for deciding how trustworthy the correlation between a pair of patches is when compared with the others. To define $C$, we first denote $H: P \to \mathbb{R}^{d}$ as a truncated classification network that sends an image patch $P$ to a $d$-dimensional feature vector. Define a matrix $\tilde{C} \in \mathbb{R}^{m \times m}$, where $m$ is the number of patches we partition on each image~\cite{Rocco17}: 
\begin{equation}\label{eq: hat c}
\left( \tilde{C} \right)_{ij} = \left\langle \frac{H(P^{\mathcal{M}}_{i})}{|H(P^{\mathcal{M}}_{i})|} , \frac{H(P^{\mathcal{S}}_{j})}{|H(P^{\mathcal{S}}_{j})|} \right\rangle,
\end{equation}
where $P_i^{\mathcal{M}}$ is the $i$-th image patch of the moving image $\mathcal{M}$ and $P_j^{\mathcal{S}}$ is the $j$-th image patch of the static image $\mathcal{S}$. It is noteworthy that passing an image patch into a truncated network would yield a 3D matrix of size $\mathbb{R}^{h \times w \times m}$, where $h$ and $w$ are the height and width of each image patch, which can be viewed as $\mathbb{R}^{d}$ where $d = hwm$. This allows us to define the inner product of 3D matrices as the usual inner product of vectors.

Then, we normalize each row of $\tilde{C}$ by
\begin{equation}
    \left( \hat{C} \right)_{ij} = \frac{\tilde{C}_{ij} - \mu_{i}}{\sigma_{i}},
\end{equation}
where $\sigma_{i}$ and $\mu_{i}$ are the standard deviation and mean of row $i$ respectively. The reason for normalising each row of $\tilde{C}$ is to spread out the similarity of patches. For example, if a row in $\tilde{C}$ happens to have most entries fluctuate within a small range, it is not easy to assess the similarity between the patches. Normalising the matrix row-by-row is beneficial for finding descent direction.

In many applications, it is common that there is a large background region with a uniform color in the images to be registered. For instance, the X-ray, CT and MRI scans in medical imaging are usually with a completely black background. For such uniform background patches in the moving image, it is expected that their corresponding rows in $\hat{C}$ would be exactly the same and hence are not useful for the registration process. Since it is expected that the feature vectors extracted from different background patches on the source image would give an inner product almost the same with the feature vectors extracted from any patches on the target image, there is a need to further prune the meaningless rows in $\hat{C}$. To neglect these background rows, we define an $m \times m$ elimination matrix $E$ with
\begin{equation}\label{eq: elimination matrix}
    (E)_{ij} = \begin{cases}
     0, \text{ if $i \neq j$}, \\
     0, \text{ if $i = j$ and row $i$ is not unique in $\hat{C}$},\\
     1, \text{ if $i = j$ and row $i$ is unique in $\hat{C}$}. \\
     \end{cases}
\end{equation}
Now, we can remove the background rows in $\hat{C}$ and obtain an updated matrix $C^{\dagger} \in \mathbb{R}^{m \times m}$ by
\begin{equation}
    C^{\dagger} = E \hat{C}.
\end{equation}
Finally, the matrix $C$ in the proposed energy~\eqref{energy} is constructed by sparsifying $C^{\dagger}$: For each row in $C^{\dagger}$, we keep the largest entry and set all other entries to be zero. The reason for sparsifying $C^{\dagger}$ is essentially for computation efficiency. More specifically, after the sparsification, we can obtain a descent direction based on the most significant correlation of the patches for the energy minimization in the later steps. Without the sparsification step, the other non-zero entries would generate some misleading direction in finding the descent direction and hence affect the efficiency.

To define $D(g)$ in the energy~\eqref{energy}, we denote the center of image patch $P_i^{\mathcal{M}}$ and $P_j^{\mathcal{S}}$ on the moving and static images as $x_i^{\mathcal{M}}$ and $x_j^{\mathcal{S}}$ respectively. The mapping $D: \{ g:\mathcal{M} \to \mathcal{S} \} \to [0,1]^{m \times m}$ is defined by
\begin{equation}\label{eq: D def}
    \left(D(g) \right) _{ij} = \exp \left( - \frac{\| g(x_i^{\mathcal{M}}) - x_j^{\mathcal{S}} \|^{2}_{2}}{\sigma^2} \right),
\end{equation}
where $\sigma$ is a small number to be chosen.

As for the purpose of the Hadamard product in the proposed fidelity term $\|C \otimes D(g) - C \|^{2}_{F}$, note that $\left(D(g) \right) _{ij} = 1$ if and only if $g(x_i^{\mathcal{M}}) = x_j^{\mathcal{S}}$, i.e. $g$ maps the patch $P_i^{\mathcal{M}}$ on the moving image $\mathcal{M}$ to the patch $P_j^{\mathcal{S}}$ on static image $\mathcal{S}$ exactly. Now, note that the fidelity term can be expressed as a weighted sum of correspondences:
\begin{equation}
    \|C \otimes D(g) - C \|^{2}_{F} = \sum_{ij} C^{2}_{ij} (\left(D(g) \right) _{ij}-1)^2.
\end{equation}
Therefore, with the use of the Hadamard product, $(C)_{ij}$ can serve as a weighting factor to determine how exact the mapping between $x_i^{\mathcal{M}}$ and $x_j^{\mathcal{S}}$ should be. More specifically, if $(C)_{ij}$ is large, i.e. the correlation between the two patches is high as determined by the pre-trained network, then $\left(D(g) \right) _{ij}$ should be as close to 1 as possible; if $(C)_{ij}$ is small, the requirement for $\left(D(g) \right)_{ij} = 1$ can be relaxed.

Moreover, in the optimization process, note that the proposed fidelity term can be handled using gradient descent. The descent direction (denoted as $\mathrm{d}f_{\mathcal{W}}: \{x_i^{\mathcal{M}} \}_{i=1}^{m} \to \mathbb{R}^{2}$) can be explicitly written as
\begin{equation}\label{c term descent}
   \mathrm{d}f_{\mathcal{W}}(x_i^{\mathcal{M}}) = \frac{4}{\sigma^{2}}\sum_{j} (C)^{2}_{ij}\left( \exp \left( - \frac{\| g(x_i^{\mathcal{M}}) - x_j^{\mathcal{S}} \|^{2}_{2}}{\sigma^2} \right) - 1 \right) (g(x_i^{\mathcal{M}}) - x_j^{\mathcal{S}}).
\end{equation}
Therefore, if $(C)_{ij} \approx 0$, i.e. the network determines that the patch $P_i^{\mathcal{M}}$ on the moving image is not correlated to the patch $P_j^{\mathcal{S}}$ on the target image, the corresponding descent direction for the $(i,j)$ pair will be close to 0. In other words, the Hadamard product allows us to eliminate unwanted descent directions based on the information provided by the CNN encoded in the matrix $C$.

On the contrary, if we replace the proposed fidelity term with $\|D(g) - C\|^{2}_{F}$, then under the same condition of $(C)_{ij} \approx 0$, the descent direction will become
\begin{equation}
\begin{aligned}
    \mathrm{d}f(x_i^{\mathcal{M}}) &= \frac{4}{\sigma^{2}}\sum_{j} \left( \exp \left( - \frac{\| g(x_i^{\mathcal{M}}) - x_j^{\mathcal{S}} \|^{2}_{2}}{\sigma^2} \right) - (C)_{ij} \right) (g(x_i^{\mathcal{M}}) - x_j^{\mathcal{S}}) \\
    &\approx \frac{4}{\sigma^{2}}\sum_{j} \exp \left( - \frac{\| g(x_i^{\mathcal{M}}) - x_j^{\mathcal{S}} \|^{2}_{2}}{\sigma^2} \right) (g(x_i^{\mathcal{M}}) - x_j^{\mathcal{S}}) \\
    &\neq 0 \qquad \text{if $g(x_i^{\mathcal{M}}) \neq x_j^{\mathcal{S}}$}. 
\end{aligned}
\end{equation}
This shows that the descent direction is noisy in case the Hadamard product is not used.

After justifying our proposed energy model~\eqref{energy}, it is natural to ask whether the existence of the minimizer~\eqref{raw model} is guaranteed. We have the following result:
\begin{proposition}[Existence of minimizer]
Let
\begin{equation}\label{admissible set}
   \mathcal{A}  = \{ \nu \in C^{1}(\Omega_{1}) \ : \ \|D\nu\|_{\infty} \leq C_{1}; \| \nu \|_{\infty} \leq 1 - \epsilon \}
\end{equation}
for some $C_{1} > 0$ and some small $\epsilon > 0$. Then the proposed energy $E_C$ has a minimizer in $\mathcal{A} \subset C^{1}(\Omega_{1})$ if $I_{\mathcal{M}},I_{\mathcal{S}}$ are $L^{2}$ functions.
\end{proposition}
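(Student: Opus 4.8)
The plan is to apply the direct method in the calculus of variations, viewing $E_C$ throughout as a functional of the Beltrami coefficient $\nu=\mu(g)$, where the associated map $g=g_\nu$ is reconstructed and uniquely normalized through the Measurable Riemann Mapping Theorem (Theorem~\ref{thm:Beltrami}). Since every term appearing in the energy~\eqref{energy} is a square or an $L^2$-integral of a real quantity, the functional is nonnegative, so $m:=\inf_{\nu\in\mathcal{A}}E_C(\nu)$ is finite and I may fix a minimizing sequence $\{\nu_n\}\subset\mathcal{A}$ with $E_C(\nu_n)\to m$.

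First I would extract a convergent subsequence from the constraints defining $\mathcal{A}$. The uniform bound $\|\nu_n\|_\infty\le 1-\epsilon$ and the uniform Lipschitz bound $\|D\nu_n\|_\infty\le C_1$ make $\{\nu_n\}$ a bounded, equicontinuous family, so by Arzel\`a--Ascoli a subsequence (not relabelled) converges uniformly on $\overline{\Omega_1}$ to a limit $\nu^*$. Being a uniform limit of $C_1$-Lipschitz functions bounded by $1-\epsilon$, the limit satisfies $\|\nu^*\|_\infty\le 1-\epsilon$ and is itself $C_1$-Lipschitz; passing to a further subsequence so that $D\nu_n\rightharpoonup D\nu^*$ weakly in $L^2$ identifies its distributional gradient and preserves $\|D\nu^*\|_\infty\le C_1$.

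The decisive point is to pass to the limit in each term of~\eqref{energy}. The Dirichlet term $\int|\nabla\nu|^2$ is convex in the gradient and hence weakly lower semicontinuous, while $\int\alpha|\nu|^2$ is continuous under the uniform convergence $\nu_n\to\nu^*$. The two remaining terms depend on $\nu$ only through $g_\nu$, and here I would invoke the continuous dependence of the normalized solution of the Beltrami equation on its coefficient: because $\|\nu_n\|_\infty\le 1-\epsilon<1$ uniformly, quasiconformal theory (Theorem~\ref{thm:Beltrami} together with the classical stability estimates) yields $g_{\nu_n}\to g_{\nu^*}$ locally uniformly. The fidelity term converges outright, since $\|C\otimes D(g)-C\|_F^2$ depends on $g$ only through the finitely many values $g(x_i^{\mathcal{M}})$ at the patch centers, and $D(g)$ is continuous in these values through~\eqref{eq: D def}.

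The main obstacle I anticipate is the intensity term $\int_{\mathcal{M}}(I_{\mathcal{M}}-I_{\mathcal{S}}(g))^2$ under the mere hypothesis $I_{\mathcal{M}},I_{\mathcal{S}}\in L^2$: passing to the limit requires controlling the composition $I_{\mathcal{S}}(g_{\nu_n})$, which is not continuous for a general $L^2$ integrand even when $g_{\nu_n}\to g_{\nu^*}$ uniformly. To handle this I would approximate $I_{\mathcal{S}}$ in $L^2$ by continuous functions, dispose of the continuous part using uniform convergence of the maps, and control the remainder through the change-of-variables formula together with the uniform integrability of the Jacobians $\mathrm{Jac}(g_{\nu_n})$, which follows from the uniform dilatation bound $K(g_{\nu_n})\le(2-\epsilon)/\epsilon$ and the higher integrability of Jacobians of quasiconformal maps. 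This yields at least lower semicontinuity of the intensity term, which suffices to conclude $E_C(\nu^*)\le\liminf_n E_C(\nu_n)=m$, so that $\nu^*$ attains the infimum. The remaining subtlety—that the Lipschitz limit $\nu^*$ be genuinely $C^1$ rather than only $W^{1,\infty}$—I would settle either by passing to the $W^{1,\infty}$-closure of $\mathcal{A}$ or by bootstrapping the regularity of $\nu^*$ from its Euler--Lagrange equation via standard elliptic estimates.
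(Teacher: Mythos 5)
Your proposal is correct in strategy but takes a genuinely different route from the paper. The paper argues topologically: it asserts in one line that $E_C$ is continuous when $I_{\mathcal{M}},I_{\mathcal{S}}\in L^2$, and then spends the proof showing that $\mathcal{A}$ is compact in the norm $\|\cdot\|_s=\|\cdot\|_\infty+\|D\cdot\|_\infty$ (non-emptiness via the Teichm\"uller coefficient, completeness via $W^{1,\infty}$, total boundedness deferred to Proposition 4.1 of \cite{QCHR}), and concludes by the extreme value theorem. You instead run the direct method: minimizing sequence, Arzel\`a--Ascoli in the uniform topology, weak convergence of gradients, and term-by-term lower semicontinuity. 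What your route buys is an honest accounting of where the real difficulties sit: (i) the composition $I_{\mathcal{S}}(g_{\nu_n})$ is not continuous in $g$ for a general $L^2$ integrand, which you treat by density of continuous functions plus uniform integrability of the Jacobians of the uniformly quasiconformal maps --- the paper's claim that ``$E_C$ is continuous'' silently assumes exactly this; and (ii) the Arzel\`a--Ascoli limit is only Lipschitz, so it need not lie in $\mathcal{A}\subset C^1$ as defined. Point (ii) is not a defect peculiar to your method: a set that is merely bounded in $C^1$ (the definition of $\mathcal{A}$ imposes no modulus of continuity on $D\nu$) is precompact in $C^0$ but not in the $C^1$ norm, so the paper's appeal to total boundedness under $\|\cdot\|_s$ faces the same obstruction; your suggestion of passing to the $W^{1,\infty}$-closure is the natural repair for both arguments.

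The one step you should not leave as an ``either/or'' is the regularity of the limit $\nu^*$: the Euler--Lagrange bootstrap is not straightforward here, since the constraint $\|\nu\|_\infty\le 1-\epsilon$ may be active at the minimizer and the energy depends on $\nu$ nonlocally through $g_\nu$, so standard interior elliptic estimates do not apply directly. To make the argument close as stated, either restate the admissible set as $\{\nu\in W^{1,\infty}(\Omega_1):\|D\nu\|_\infty\le C_1,\ \|\nu\|_\infty\le 1-\epsilon\}$, where your proof is complete, or strengthen $\mathcal{A}$ with an equicontinuity condition on $D\nu$ so that Arzel\`a--Ascoli applies at the level of derivatives.
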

\begin{proof}

First, note that if $I_{\mathcal{M}},I_{\mathcal{S}}$ are $L^{2}$ functions, then $E_C$ is continuous. Therefore, to prove that $E_C$ has a minimizer in $\mathcal{A}$, it suffices to prove that $\mathcal{A}$ is compact. The proof follows from the proof of Proposition 4.1 in~\cite{QCHR} and is outlined below.

Note that the Beltrami coefficient $\mu^*$ associated with the unique Teichm\"uller map is in $\mathcal{A}$ by choosing $C_1 = \|D\mu^*\|_{\infty}$ and $\epsilon \in (0, 1-\|\mu^*\|_{\infty})$. Therefore, $\mathcal{A} \neq \emptyset$.

Now, we prove that $\mathcal{A}$ is complete. Let $\{\nu_n\}_{n=1}^{\infty}$ be a Cauchy sequence in $\mathcal{A}$ under the norm $\|\cdot\|_s := \|\cdot\|_{\infty} + \|D \cdot \|_{\infty}$. Then $\{D\nu_n\}_{n=1}^{\infty}$ is also a Cauchy sequence under the norm $\|\cdot\|_s$. As $\mathcal{A} \subset W^{1,\infty}(\Omega_1)$ and $W^{1,\infty}$ is complete, it follows that $\{D\nu_n\}_{n=1}^{\infty}$ converges to some $g \in W^{1,\infty}$ uniformly. As $D\nu_n$ is continuous, $g$ is also continuous. Also, $\{\nu_n\}_{n=1}^{\infty}$ is convergent. Hence, we have $\nu_n \to \nu$ and $D\nu_n \to D\nu$ uniformly for some $nu \in C^1(\Omega_1)$. Moreover, as $\|v_n\|_{\infty} \leq 1-\epsilon$ for all $n$, we have $\|\nu\|_{\infty} \leq 1-\epsilon$. Similarly, as $\|D\nu_n\|_{\infty} \leq 1-\epsilon$ for all $n$, we have $\|D\nu\|_{\infty} \leq C_1$. Therefore, $\nu \in \mathcal{A}$ and hence $\mathcal{A}$ is Cauchy complete. We can also show that $\mathcal{A}$ is totally bounded using the argument in the proof of Proposition 4.1 in~\cite{QCHR}. Hence, $\mathcal{A}$ is compact.

As $E_C$ is continuous in $\mathcal{A}$ and $\mathcal{A}$ is compact, we conclude that $E_C$ has a minimizer in $\mathcal{A}$.
\end{proof}

As a remark, a major difference between the proposed energy model and the landmark- and intensity-based quasiconformal hybrid registration (QCHR) method \cite{QCHR} is that QCHR uses the prescribed feature pairs as hard landmark constraints for the energy minimization, while in our method we do not impose any landmark constraints. Using a common classification network pre-trained on very wide-ranging datasets, we obtain multiple correlated pairs of image patches. Instead of enforcing them as landmark constraints, we encode the correlation information in the proposed fidelity term in the energy~\eqref{energy}, which helps supply a good descent direction for yielding a good registration. This does not only avoid any potential non-bijectivity due to incorrectly correlated pairs but also allow the energy model to utilize the correlation information for the registration of more general images that it has not seen before. 

\subsection{Energy minimization using the penalty splitting method}\label{minimisation method}
The penalty splitting method is used for solving the energy minimization problem~\eqref{raw model}. More specifically, instead of directly minimizing the energy~\eqref{energy}, we minimize the following energy
\begin{equation} \label{splitted energy}
     E_{C}(\nu,f) = \int_{\mathcal{M}} (| \nabla \nu |^{2} + \alpha | \nu |^{2} + \rho | \nu - \mu(f) |^{2}  + \beta \left( I_{\mathcal{M}} - I_{\mathcal{S}}(f) \right)^{2}) \mathrm{d}z + \gamma \| C \otimes D(f) - C \|^{2}_{F}.
\end{equation}
Here, the new term $\rho | \nu - \mu(f) |^{2}$ forces $\mu(f)$ to closely resemble $\nu$. This allows us to consider the Beltrami coefficient $\mu$ and the mapping $f$ separately, so that the optimization can be performed more easily.

\paragraph{Minimizing over $\nu$}
If we fix $f=f_{n}$, it suffices to minimize
\begin{equation}
        E_{C}(\nu, f_{n}) = \int_{\mathcal{M}} (| \nabla \nu |^{2} + \alpha | \nu |^{2} + \rho | \nu - \mu(f_{n})|^{2}) \mathrm{d}z.
\end{equation}
As discussed in~\cite{QCHR}, the minimizer of the above energy can be obtained by solving the Euler--Lagrange equation:
\begin{equation}\label{euler lagrange eq}
    (- \Delta + 2\alpha I + 2 \rho I) \nu_{n+1} = 2\rho \mu(f_{n}).
\end{equation}
In the discrete case, Equation~\eqref{euler lagrange eq} can be discretized as a sparse linear system and be solved efficiently.

\paragraph{Minimizing over $f$} \label{minimize f}
If we fix $\nu = \nu_n$, it suffices to minimize
\begin{equation}
        E_{C}(\nu_n, f) = \int_{\mathcal{M}} (\rho | \nu_n - \mu(f) |^{2}  + \beta \left( I_{\mathcal{M}} - I_{\mathcal{S}}(f) \right)^{2}) \mathrm{d}z + \gamma \| C \otimes D(f) - C \|^{2}_{F}.
\end{equation}
This can be done by performing gradient descent on the Beltrami coefficient $\mu = \mu_f$ associated with $f$.

First, note that the descent direction for the intensity term $\left( I_{\mathcal{M}} - I_{\mathcal{S}}(f) \right)^{2}$ in the space of mappings (denoted as $\mathrm{d}{f_\mathcal{I}}$) is given by
\begin{equation}\label{intensity descent}
    \mathrm{d}f_{\mathcal{I}} = -2 ( I_{\mathcal{M}} - I_{\mathcal{S}}(f)) \nabla I_{\mathcal{S}}(f).
\end{equation}
Also, the descent direction $\mathrm{d}f_{\mathcal{W}}$ for the fidelity term $\| C \otimes D(f) - C \|^{2}_{F}$ in the space of mappings is given by Equation~\eqref{c term descent}. The next step is to obtain the descent directions in the space of Beltrami coefficients (denoted as $\mathrm{d}\mu_{\mathcal{I}}$ and $\mathrm{d}\mu_{\mathcal{W}}$) from $\mathrm{d}f_{\mathcal{I}}$ and $\mathrm{d}f_{\mathcal{W}}$. Note that $\mathrm{d}f_{\mathcal{I}}$ and $\mathrm{d}f_{\mathcal{W}}$ can be viewed as a perturbation:
\begin{equation}\label{perturbation}
       \frac{\partial(f + \mathrm{d}f_{i})}{\partial \Bar{z}} = (\mu + \mathrm{d}\mu_{i}) \frac{\partial(f + \mathrm{d}f_{i})}{\partial z}, \ \ i = \mathcal{I}, \mathcal{W}.
\end{equation}
Then, following the approach in~\cite{QCHR}, we can obtain $\mathrm{d}\mu_{i}$ as follows:
\begin{equation} \label{adjustment}
   \mathrm{d}\mu_{i} = \left( \left. \frac{\partial \mathrm{d}f_{i}}{\partial \Bar{z}} - \mu \frac{\partial \mathrm{d}f_{i}}{\partial z} \right) \right/ \frac{\partial (f + \mathrm{d}f_{i})}{\partial z}, \ \ i = \mathcal{I}, \mathcal{W}.
\end{equation}

For the term $|\nu_n - \mu(f) |^{2}$, the descent direction in the space of Beltrami coefficients (denoted as $\mathrm{d}\mu_{\mathcal{D}}$) is given by 
\begin{equation}\label{eqt:numu}
    \mathrm{d}\mu_{\mathcal{D}} = -2 ( \nu_n - \mu(f)).
\end{equation}

Combining the three descent directions in Equation~\eqref{adjustment} and Equation~\eqref{eqt:numu}, we can update $\mu$ at every gradient descent iteration as follows, for some $t_{1}, t_{2}, t_{3}$ chosen according to $\rho, \beta, \gamma$:
\begin{equation}\label{update rule}
    \mu_{n+1} = \mu_{n} + (t_{1} \mathrm{d}\mu_{\mathcal{I}} + t_{2} \mathrm{d}\mu_{\mathcal{W}} + t_{3}\mathrm{d}\mu_{\mathcal{D}}).
\end{equation}
After obtaining the new $\mu_{n+1}$, we can then reconstruct a quasiconformal map $f_{n+1}$ associated with $\mu_{n+1}$ using the LBS method~\cite{QCHR}.

\subsection{Additional intensity-based registration}
In general, the input source and target images will be well registered after we perform the minimization on the energy~\eqref{energy}. However, sometimes some details of the images may still not be perfectly matched. Therefore, we introduce an additional step to further improve the registration result by performing a fully intensity-based matching.

More specifically, at the $n$-th iteration, we use the DDemons method~\cite{demons} to register the current mapping result $f_n(\mathcal{M})$ and the target image $\mathcal{S}$, and denote the updated map as $f_{n} + \mathrm{d}f$. Then, we adopt the same strategy as described above to compute a descent direction $\mathrm{d}\nu$ in the space of Beltrami coefficients. We can then obtain an updated Beltrami coefficient by
\begin{equation}
    \nu_{n+1} = \nu_n + t \mathrm{d}\nu,
\end{equation}
from which we can reconstruct a quasiconformal map $f_{n+1}$ using the LBS method~\cite{QCHR}, with a proper truncation for enforcing that $\|\nu_{n+1} \|_{\infty} < 1$. We repeat the process until meeting the convergence condition.

Our proposed registration method is summarized in Algorithm~\ref{alg:cnnqc}.

\begin{algorithm}[h]
    \caption{Quasiconformal image registration with CNN features}
    \label{alg:cnnqc}
        \begin{algorithmic}[1]
            \renewcommand{\algorithmicrequire}{\textbf{Input:}}
            \renewcommand{\algorithmicensure}{\textbf{Output:}}
            \REQUIRE A moving image $\mathcal{M}$, a static image $\mathcal{S}$, a pre-trained CNN with truncation, the partition parameter $m$, the convergence threshold $\epsilon$, and the maximum number of iterations $n_{\max}$.
            \ENSURE  A quasiconformal map $f:\mathcal{M} \to \mathcal{S}$.
            \STATE Partition each image into $m$ pieces.
            \STATE Pass all $2m$ pieces into the truncated network and obtain the matrix $C$.
            \\ \textit{Energy minimization:}
            \STATE Initialize $\mu_{0} = 0$ and $\nu_{0} = 0$.
            \WHILE{$|\nu_{n+1}-\nu_{n}| > \epsilon $ and $n \leq n_{\max}$}
            \STATE Fixing $f = f_{n}$, obtain $\nu_{n+1}$ by solving the Euler--Lagrange equation~\eqref{euler lagrange eq}.
            \STATE Reconstruct a quasiconformal map with the Beltrami coefficient $\nu_{n+1}$ using the LBS method~\cite{QCHR}. 
            \STATE Fixing $\nu = \nu_{n+1}$, obtain $\mu(f_{n+1})$ by the gradient descent method in Equation~\eqref{update rule}.
            \STATE Reconstruct a quasiconformal map $f_{n+1}$ with the Beltrami coefficient $\mu_{n+1}$ using the LBS method~\cite{QCHR}.
            \STATE Update $n \gets n + 1$.
            \ENDWHILE
            \\ \textit{Additional intensity-based registration:}
            \WHILE{$|\nu_{n+1}-\nu_{n}| > \epsilon$ and $n \leq n_{\max}$} 
                \STATE Use the DDemons method~\cite{demons} to register $f_{n}(\mathcal{M})$ and $\mathcal{S}$ and denote the mapping as $f_{n} + \mathrm{d}f$.
                \STATE Obtain $\mathrm{d}\nu$ using the same procedure as in Equation~\eqref{perturbation} and Equation~\eqref{adjustment}.
                \STATE Update $\nu_{n+1} \gets \nu_{n} + t\mathrm{d}\nu$.
                \STATE Reconstruct a quasiconformal map $f_{n+1}$ from the Beltrami coefficient $\nu_{n+1}$ using the LBS method~\cite{QCHR}, with a proper truncation to enforce $\|\nu_{n+1} \|_{\infty} < 1$ if necessary.
                \STATE Update $n \gets n + 1$.
            \ENDWHILE
        \end{algorithmic}
\end{algorithm}
    
\section{Numerical implementation}\label{sect:implementation}
In the discrete case, we discretize the input images in the form of triangular meshes. Let $V^{1} = \{ v^{1}_{i} \}_{i=1}^{n}$, $V^{2} = \{ v^{2}_{i} \}_{i=1}^{n}$ be the vertex sets and $F^{1} = \{ T^{1}_{j} \}_{j=1}^{n}$, $F^{2} = \{ T^{2}_{j} \}_{j=1}^{n}$ be the face sets of the moving image $\mathcal{M}$ and the static image $\mathcal{S}$ respectively.

\subsection{Discretization of the Euler--Lagrange equation}
The Beltrami coefficient $\mu(T)$ is first discretized on each triangular face $T$ (see~\cite{QCHR,choi2015flash} for details). We can then compute the Beltrami coefficient on a vertex by taking the average value of $\mu$ on its one-ring neighboring faces:
\begin{equation}
    \mu(v_{i}) = \frac{1}{N_{i}} \sum_{T \in N_{i}} \mu(T),
\end{equation}
where $N_{i}$ is the collection of all faces incident to $v_{i}$. 

The discrete Laplacian operator $\Delta$ is given by
\begin{equation}
\Delta \left( f(v_{i}) \right) = \sum_{T \in N_{i}} \frac{\cot \alpha_{ij} + \cot \beta_{ij}}{2} \left( f(v_{j}) - f(v_{i}) \right),
\end{equation}
where $\alpha_{ij}$ and $\beta_{ij}$ are the two angles opposite to a common edge $[v_{i},v_{j}]$. 

One can then solve the Euler--Lagrange equation~\eqref{euler lagrange eq} on the vertices and finally discretize the solution $\nu$ on each triangular face by taking the average value at its three vertices:
\begin{equation}
    \nu(T) = \frac{1}{3} \sum_{v_{i} \in T} \nu(v_{i}).
\end{equation}

\subsection{Numerical techniques for intensity matching}
We use two different numerical optimization techniques for the intensity-based registration in the main energy minimization step and the additional intensity-matching step in Algorithm~\ref{alg:cnnqc}. For the main energy minimization step, we apply the modified Demon force by Wang \emph{et~al.}~\cite{wangdemon} to find the deformation:
\begin{equation}
    u = \frac{(I_{\mathcal{M}} - I_{\mathcal{S}}) \nabla I_{\mathcal{S}}}{| \nabla I_{\mathcal{S}} |^{2} + \alpha^{2}(I_{\mathcal{M}} - I_{\mathcal{S}})^{2}} + \frac{(I_{\mathcal{M}} - I_{\mathcal{S}}) \nabla I_{\mathcal{M}}}{| \nabla I_{\mathcal{M}} |^{2} + \alpha^{2}(I_{\mathcal{M}} - I_{\mathcal{S}})^{2}}.
\end{equation}
This modified Demon force is applied when finding $\mathrm{d}\mu_{\mathcal{I}}$ in the update rule (\ref{update rule}). This force is good for maintaining the diffeomorphic property of the mapping, the convergence speed as well as the stability with gradient descent. However, for the additional fully intensity-based matching step, as there can be multiple local minima for the intensity function, the gradient descent method does not necessarily yield an optimal result. Therefore, we adopt the BFGS optimization scheme~\cite{code}, which takes a longer time but achieves a more accurate intensity-based registration result.

\subsection{Gradient descent for the fidelity term}
For the proposed fidelity term, the descent direction $\mathrm{d}f(x_i^{\mathcal{M}})$ is given by Equation~\eqref{c term descent}. In practice, we notice that Equation~\eqref{c term descent} may sometimes give non-orientation preserving descent directions in some local neighborhoods so that there may be triangle fold-overs in the underlying mesh, which require additional procedures of truncating the associated Beltrami coefficients and hence affect the convergence of the computation. To alleviate this issue, we apply a Gaussian smoothing on $\mathrm{d}f(x_i^{\mathcal{M}})$ around every point $x_i^{\mathcal{M}}$ with the smoothing parameter set to be the side length of the image divided by 50. With the smoothing, we can ensure that the descent is in the right direction while there will not be an extremely large change at a point relative to the local neighborhood of it.

\subsection{The choice of the pre-trained network and the model parameters}
In our experiment, we use a well-known CNN \emph{DenseNet-201}~\cite{huang2017densely}, which is a densely connected convolutional network with 201 layers. All layers after the third dense block are truncated. The images are partitioned into $m = 10 \times 10$, $12 \times 12$, $14 \times 14$, $16 \times 16$ or $18 \times 18$ patches. For each registration, we consider all choices of $m$ and choose the one that gives the best registration performance in terms of the similarity of the warped image and the target image and the smoothness of the mapping. Also, in practice we keep only the top 6 to 50 values of the correlation matrix $C$ depending on the size of the features and set all other values to be 0.

As for the weighting factors in the proposed energy model~\eqref{splitted energy}, in general we set $\alpha = 5$, $\rho = 50$, $\beta = 25\rho$, and $\gamma = 5\rho$. The parameter in Equation~\eqref{eq: D def} is set to be $\sigma =1$. 

\subsection{Multiresolution scheme}
To reduce the computational cost of registering high resolution images, we adopt a multiresolution scheme for the registration procedure. We first coarsen both the moving image $\mathcal{M}$ and the static image $\mathcal{S}$ by $k$ layers, where $I_j^0 = I_j$ is the densest image and $I_j^k$ is the coarsest image of $I_j$ ($j = 1,2$). The registration process starts with registering $\mathcal{M}^k$ and $\mathcal{S}^k$. After obtaining a diffeomorphism $f_k$, we proceed to a finer scale. Specifically, we obtain $f_{k-1}$ by a linear interpolation on $f_k$, which serves as the initial map for the registration at the next finer layer. We repeat the above process until obtaining the registration at the finest (original resolution) layer. Using this multiresolution scheme, the computation can be significantly accelerated.

\section{Experimental results}\label{sect:experiment}
To demonstrate the effectiveness of our proposed method for large deformation image registration, we test it on various synthetic and real medical images. We remark that in all examples below, we focus on the energy at the finest (original resolution) layer of the multiresolution scheme. We compare our method with DDemons~\cite{demons}, LDDMM~\cite{lddmm}, Elastix~\cite{klein2009elastix} and DROP~\cite{glocker2011deformable} (all with code or software available online~\cite{ddemons_code,lddmm_code,elastix_code,drop_code}).

\begin{figure}[t!]
    \centering
    \begin{subfigure}[h]{0.32\textwidth}
        \centering
        \includegraphics[width=\textwidth]{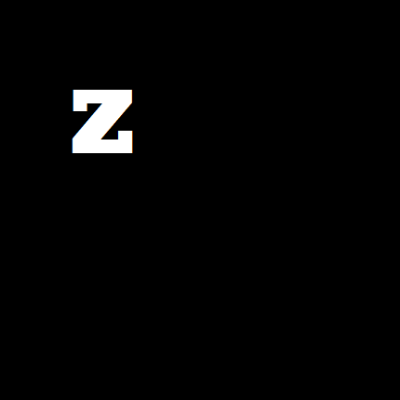}
        \caption{Source image}
        \label{fig: Z to 2 source image}
    \end{subfigure}
    \hfill
    \begin{subfigure}[h]{0.32\textwidth}
        \centering
        \includegraphics[width=\textwidth]{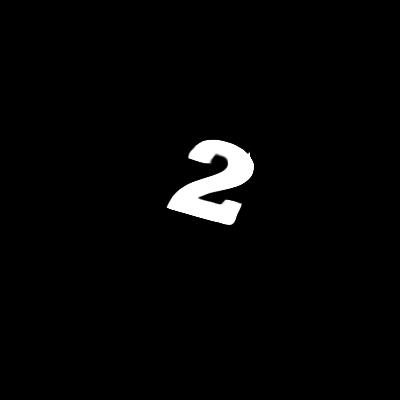}
        \caption{Target image}
        \label{fig: Z to 2 target image}
    \end{subfigure}
    \hfill
    \begin{subfigure}[h]{0.32\textwidth}
        \centering
        \includegraphics[width=\textwidth]{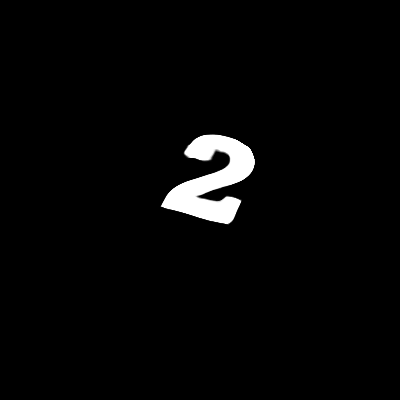}
        \caption{Our registration result}
        \label{fig: Z to 2 warped image}
    \end{subfigure}
    \hfill
    \begin{subfigure}[h]{0.32\textwidth}
        \centering
        \includegraphics[width=\textwidth]{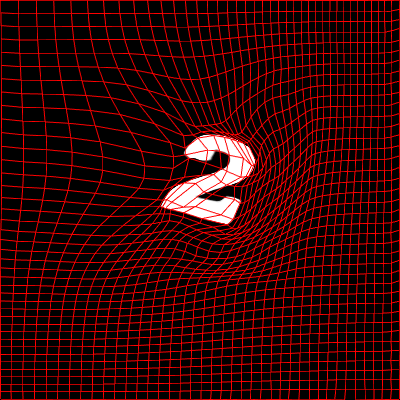}
        \caption{Warped image with grid}
        \label{fig: Z to 2 warped image with grid}
    \end{subfigure}
    \begin{subfigure}[h]{0.32\textwidth}
        \centering
        \includegraphics[width=\textwidth]{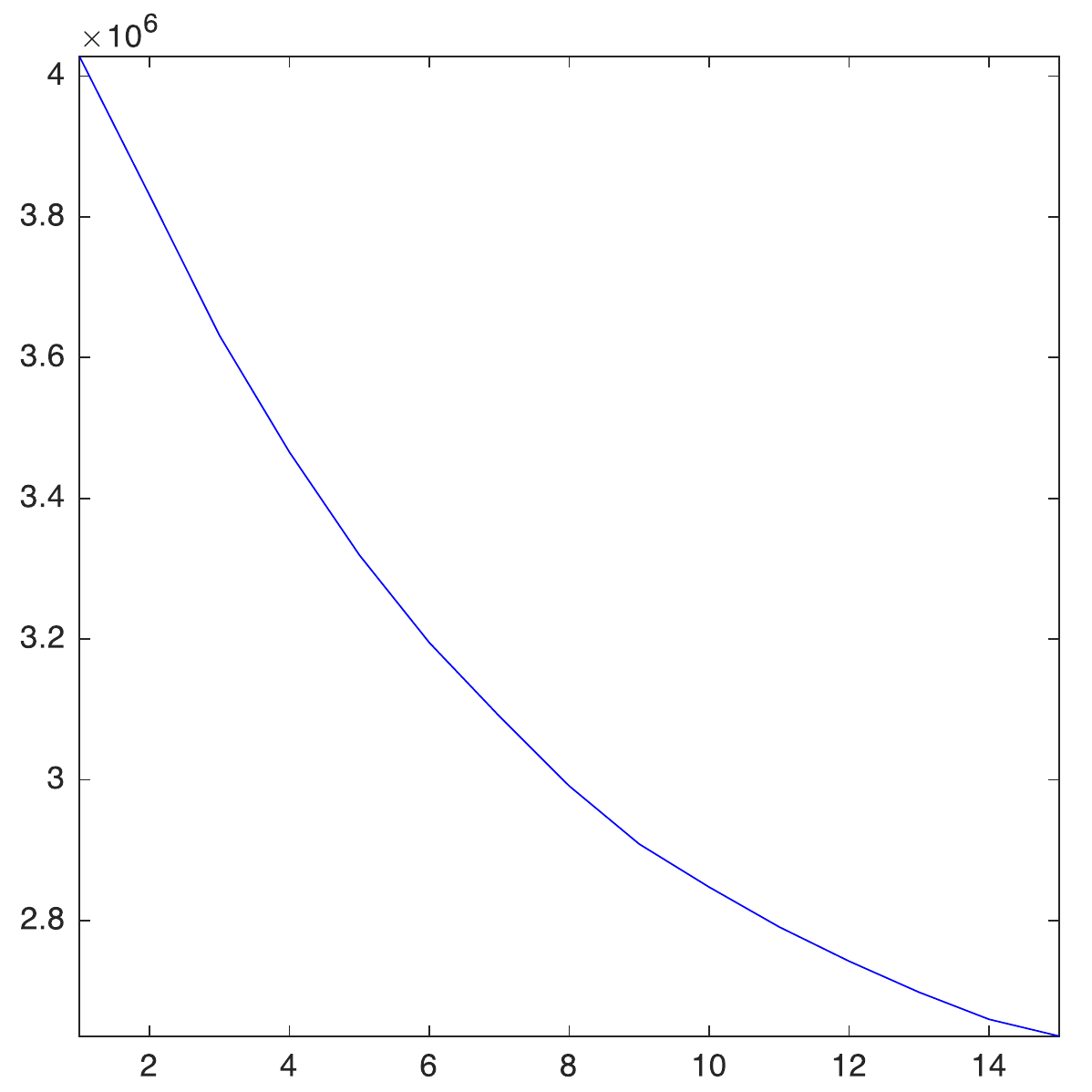}
        \caption{Energy plot}
        \label{fig: Z to 2 energy}
    \end{subfigure}
    \hfill
    \begin{subfigure}[h]{0.32\textwidth}
        \centering
        \includegraphics[width=\textwidth]{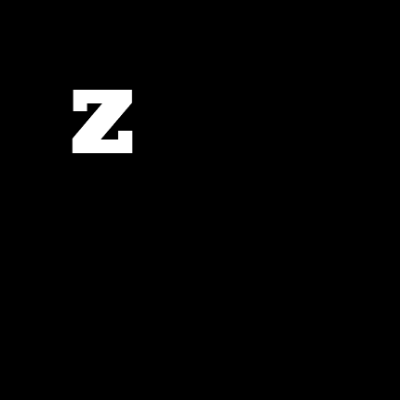}
        \caption{Result by DDemons~\cite{demons}}
        \label{fig: Z to 2 demon result}
    \end{subfigure}
    \hfill
    \begin{subfigure}[h]{0.32\textwidth}
        \centering
        \includegraphics[width=\textwidth]{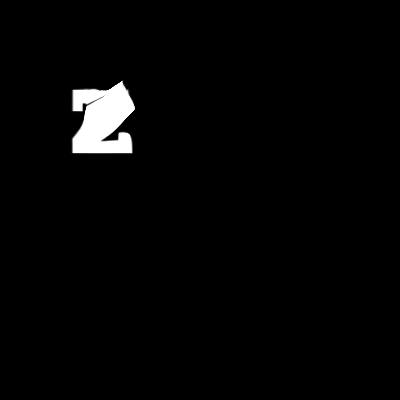}
        \caption{Result by LDDMM~\cite{lddmm}}
        \label{fig: Z to 2 lddmm result}
    \end{subfigure}
    \hfill
    \begin{subfigure}[h]{0.32\textwidth}
        \centering
        \includegraphics[width=\textwidth]{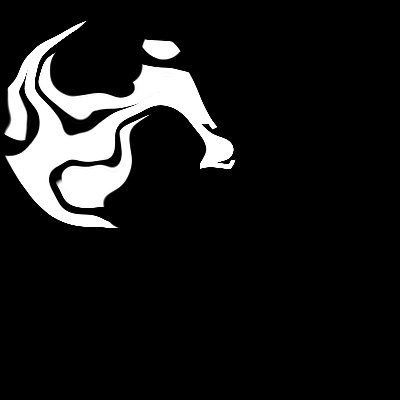}
        \caption{Result by Elastix~\cite{klein2009elastix}}
        \label{fig: Z to 2 elastix result}
    \end{subfigure}
    \hfill
    \begin{subfigure}[h]{0.32\textwidth}
        \centering
        \includegraphics[width=\textwidth]{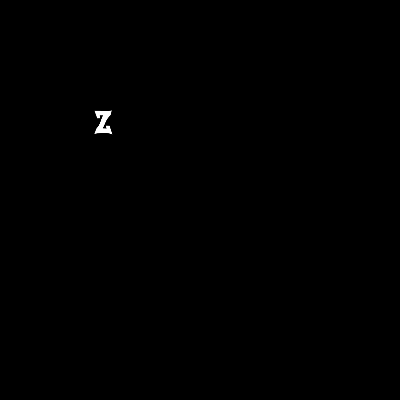}
        \caption{Result by DROP~\cite{glocker2011deformable}}
        \label{fig: Z to 2 drop result}
    \end{subfigure}
    \caption{The `Z' to `2' example.}
    \label{fig:synthetic}
\end{figure}

\begin{figure}[t!]
    \centering
    \begin{subfigure}{0.32\textwidth}
        \includegraphics[width=\textwidth]{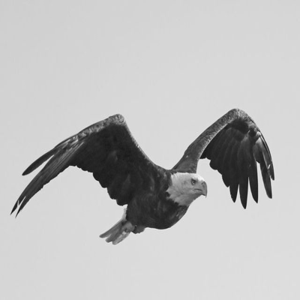}
        \caption{Source image}
        \label{fig: eagle source image}
    \end{subfigure}
    \hfill
    \begin{subfigure}{0.32\textwidth}
        \includegraphics[width=\textwidth]{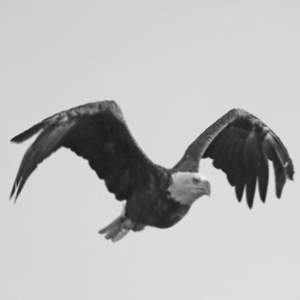}
        \caption{Target image}
        \label{fig: eagle target image}
    \end{subfigure}
    \hfill
    \begin{subfigure}{0.32\textwidth}
        \includegraphics[width=\textwidth]{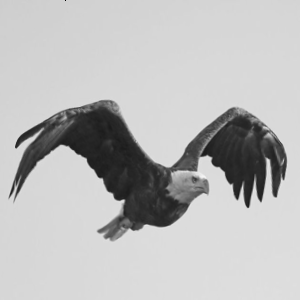}
        \caption{Our registration result}
        \label{fig: eagle warped image}
    \end{subfigure}
    \hfill
    \begin{subfigure}{0.32\textwidth}
        \includegraphics[width=\textwidth]{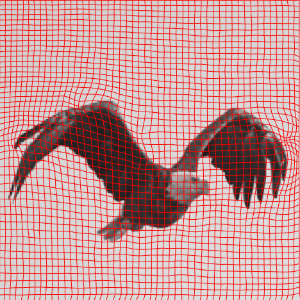}
        \caption{Warped image with grid}
        \label{fig: eagle warped image with grid}
    \end{subfigure}
    \hfill
    \begin{subfigure}{0.32\textwidth}
        \includegraphics[width=\textwidth]{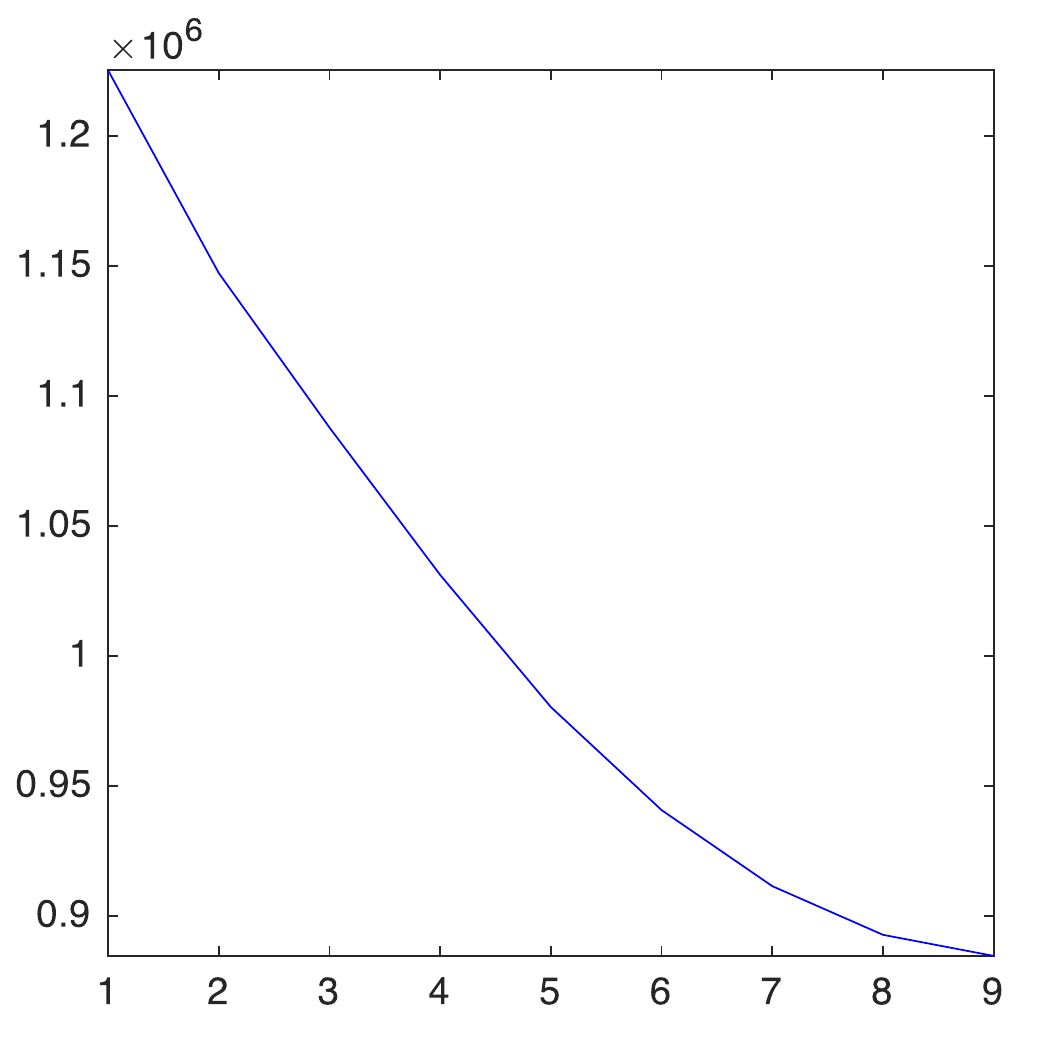}
        \caption{Energy plot}
        \label{fig: eagle energy}
    \end{subfigure}
    \hfill
    \begin{subfigure}{0.32\textwidth}
        \includegraphics[width=\textwidth]{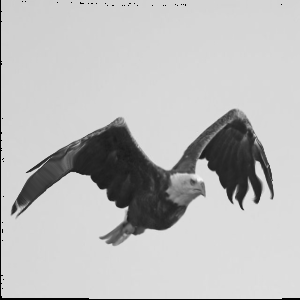}
        \caption{Result by LDDMM~\cite{lddmm}}
        \label{fig: eagle lddmm}
    \end{subfigure}
    \hfill
    \begin{subfigure}{0.32\textwidth}
        \includegraphics[width=\textwidth]{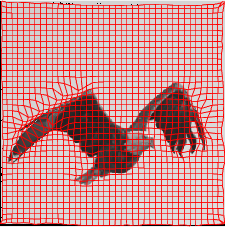}
        \caption{LDDMM deformed grid}
        \label{fig: eagle lddmm grid}
    \end{subfigure}
    \hfill
    \begin{subfigure}{0.32\textwidth}
        \includegraphics[width=\textwidth]{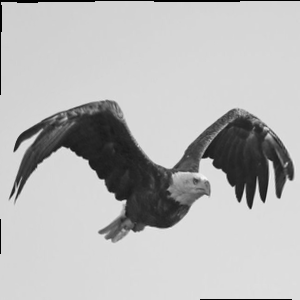}
        \caption{Result by Elastix~\cite{klein2009elastix}}
        \label{fig: eagle elastix}
    \end{subfigure}
    \hfill
    \begin{subfigure}{0.32\textwidth}
        \includegraphics[width=\textwidth]{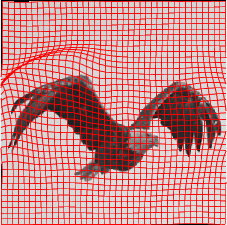}
        \caption{Elastix deformed grid}
        \label{fig: eagle elastix grid}
    \end{subfigure}
    \caption{The eagle example.}
    \label{fig:eagle}
\end{figure}

\begin{figure}[t!]
    \centering
    \begin{subfigure}{0.32\textwidth}
        \includegraphics[width=\textwidth]{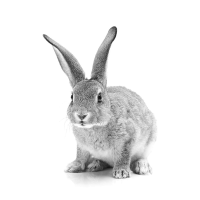}
        \caption{Source image}
        \label{fig: rabbit source image}
    \end{subfigure}
    \hfill
    \begin{subfigure}{0.32\textwidth}
        \includegraphics[width=\textwidth]{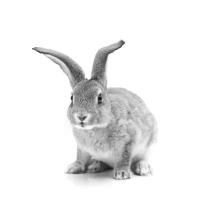}
        \caption{Target image}
        \label{fig: rabbit target image}
    \end{subfigure}
    \hfill
    \begin{subfigure}{0.32\textwidth}
        \includegraphics[width=\textwidth]{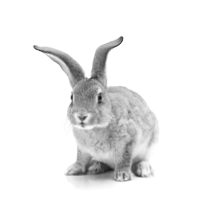}
        \caption{Our registration result}
        \label{fig: rabbit warped image}
    \end{subfigure}
    \hfill
    \begin{subfigure}{0.32\textwidth}
        \includegraphics[width=\textwidth]{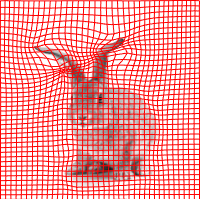}
        \caption{Warped image with grid}
        \label{fig: rabbit warped image with grid}
    \end{subfigure}
    \hfill
    \begin{subfigure}{0.32\textwidth}
        \includegraphics[width=\textwidth]{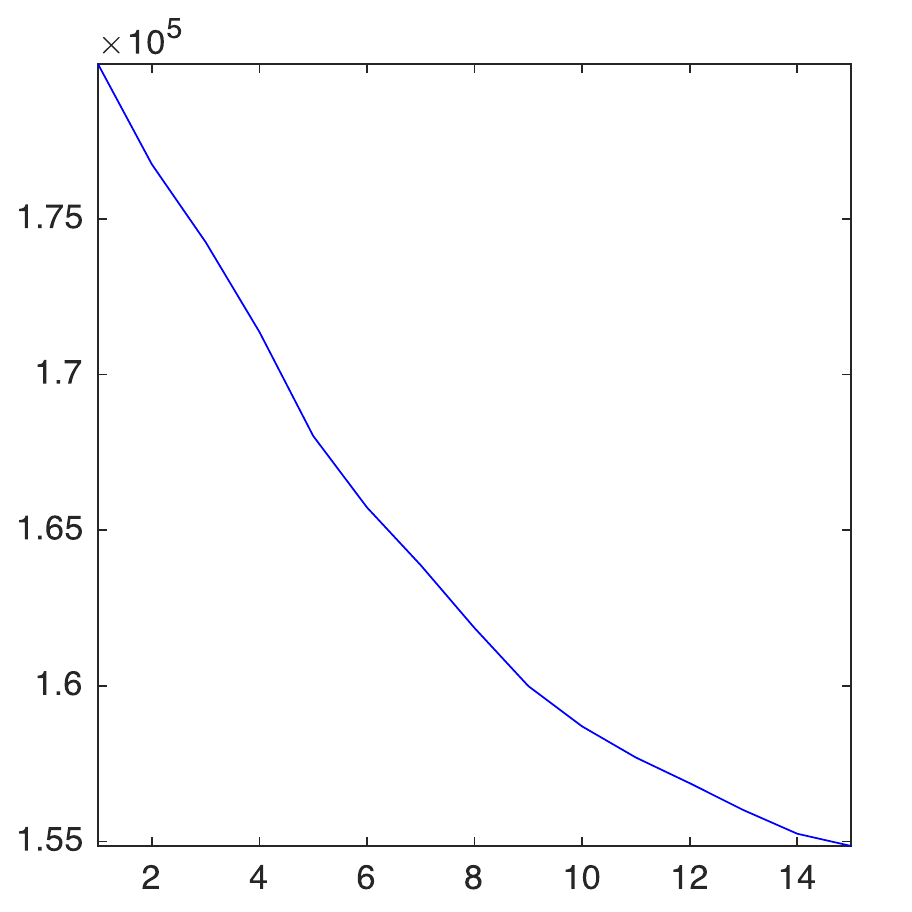}
        \caption{Energy plot}
        \label{fig: rabbit energy}
    \end{subfigure}
    \hfill
    \begin{subfigure}{0.32\textwidth}
        \includegraphics[width=\textwidth]{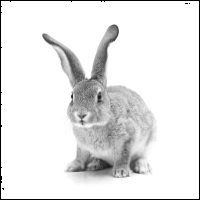}
        \caption{Result by LDDMM~\cite{lddmm}}
        \label{fig: rabbit lddmm}
    \end{subfigure}
    \hfill
    \begin{subfigure}{0.32\textwidth}
        \includegraphics[width=\textwidth]{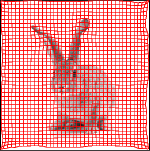}
        \caption{LDDMM deformed grid}
        \label{fig: rabbit lddmm grid}
    \end{subfigure}
    \hfill
    \begin{subfigure}{0.32\textwidth}
        \includegraphics[width=\textwidth]{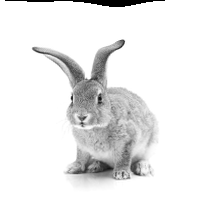}
        \caption{Result by Elastix~\cite{klein2009elastix}}
        \label{fig: rabbit elastix}
    \end{subfigure}
    \hfill
    \begin{subfigure}{0.32\textwidth}
        \includegraphics[width=\textwidth]{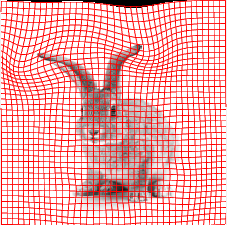}
        \caption{Elastix deformed grid}
        \label{fig: rabbit elastix grid}
    \end{subfigure}
    \caption{The rabbit example.}
    \label{fig:rabbit}
\end{figure}

\subsection{Synthetic images}
We first test our method on a synthetic example with the source image containing a letter `Z' (Fig.~\ref{fig: Z to 2 source image}) and the target image containing a tilted number `2' (Fig.~\ref{fig: Z to 2 target image}). Two major difficulties of this example are the large displacement between the `Z' and the `2' and the shape difference between them involving certain sharp angle changes. Although one may use an affine map to roughly align them, the boundary of the two images will be largely mismatched. As shown in Fig.~\ref{fig: Z to 2 warped image}, our proposed registration method effectively morphs the `Z' image to match the `2' image. One can also visualize the registration via the deformed underlying grid as shown in Fig.~\ref{fig: Z to 2 warped image with grid}, from which it can be observed that the mapping is smooth and bijective. As shown in Fig.~\ref{fig: Z to 2 energy}, the energy decreases rapidly throughout the iterations. For comparison, we apply DDemons~\cite{demons} (Fig.~\ref{fig: Z to 2 demon result}), LDDMM~\cite{lddmm} (Fig.~\ref{fig: Z to 2 lddmm result}), Elastix~\cite{klein2009elastix} (Fig.~\ref{fig: Z to 2 elastix result}), and DROP~\cite{glocker2011deformable} (Fig.~\ref{fig: Z to 2 drop result}) for registering the images. It can be observed that all these methods fail to register the `Z' and the `2' shape due to the large displacement and shape change.

We then consider another synthetic example with the source image being an eagle (Fig.~\ref{fig: eagle source image}). For the target image (Fig.~\ref{fig: eagle target image}), we manually deform the eagle shape such that the wings are expanded wider. We apply our proposed method to obtain a registration between the two images (see Fig.~\ref{fig: eagle warped image} for the warped image and Fig.~\ref{fig: eagle warped image with grid} for the warped image with the deformed underlying grid). Under the registration, the difference between the two images is significantly reduced, and the energy plot in Fig.~\ref{fig: eagle energy} again shows that our method is highly effective. On the contrary, it can be observed that LDDMM~\cite{lddmm} does not produce an accurate registration of the wings (Fig.~\ref{fig: eagle lddmm} and Fig.~\ref{fig: eagle lddmm grid}). While the registration result by Elastix~\cite{klein2009elastix} looks satisfactory (Fig.~\ref{fig: eagle elastix}), there are multiple overlaps in the deformed underlying grid (Fig.~\ref{fig: eagle elastix grid}). Overall, our method gives the best registration result with a more natural positional correspondence between different parts as shown by the deformation of the underlying grid and with bijectivity ensured.

In the next example, the source image is a rabbit (Fig.~\ref{fig: rabbit source image}), and the target image is the same rabbit with its ears bent (Fig.~\ref{fig: rabbit target image}). Our method successfully morphs the ears to the desired position as shown in Fig.~\ref{fig: rabbit warped image}. From the deformed underlying grid in Fig.~\ref{fig: rabbit warped image with grid}, we can see that the mapping is smooth and bijective. Also, the energy plot in Fig.~\ref{fig: rabbit energy} shows that our method effectively reduces the energy within a small number of iterations. On the contrary, LDDMM~\cite{lddmm} fails to match the bent ears (Fig.~\ref{fig: rabbit lddmm} and Fig.~\ref{fig: rabbit lddmm grid}). While Elastix~\cite{klein2009elastix} is able to deform the ears, the overall image shape is largely distorted (Fig.~\ref{fig: rabbit elastix} and Fig.~\ref{fig: rabbit elastix grid}).

\begin{figure}[t!]
    \centering
    \begin{subfigure}[h]{0.32\textwidth}
        \centering
        \includegraphics[width=\textwidth]{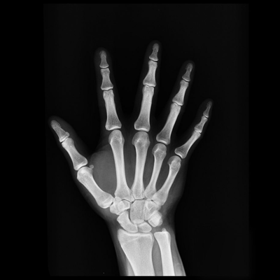}
        \caption{Source image}
        \label{fig: hand source image}
    \end{subfigure}
    \hfill
    \begin{subfigure}[h]{0.32\textwidth}
        \centering
        \includegraphics[width=\textwidth]{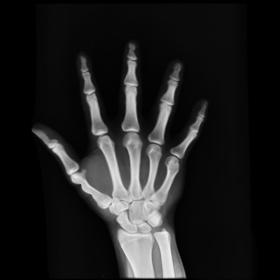}
        \caption{Target image}
        \label{fig: hand target image}
    \end{subfigure}
    \hfill
    \begin{subfigure}[h]{0.32\textwidth}
        \centering
        \includegraphics[width=\textwidth]{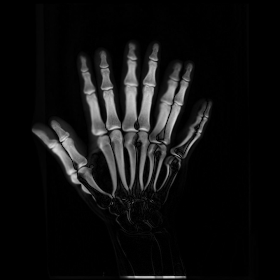}
        \caption{Intensity difference}
        \label{fig: hand initial diff}
    \end{subfigure}
    \hfill
    \begin{subfigure}[h]{0.32\textwidth}
        \centering
        \includegraphics[width=\textwidth]{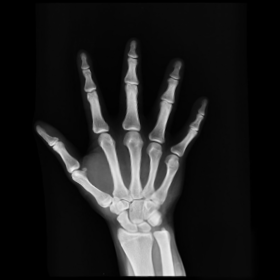}
        \caption{Our registration result}
        \label{fig: hand warped image}
    \end{subfigure}
    \hfill
    \begin{subfigure}[h]{0.32\textwidth}
        \centering
        \includegraphics[width=\textwidth]{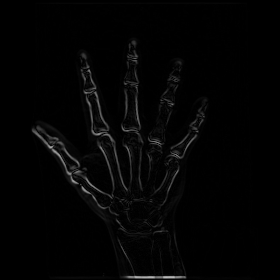}
        \caption{Final intensity difference}
        \label{fig: hand final diff}
    \end{subfigure}
    \hfill
    \begin{subfigure}[h]{0.32\textwidth}
        \centering
        \includegraphics[width=\textwidth]{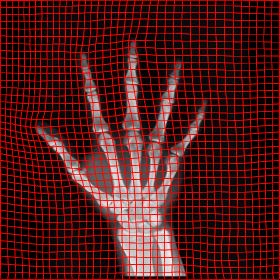}
        \caption{Warped image with grid}
        \label{fig: hand warped image with grid}
    \end{subfigure}
    \hfill
    \begin{subfigure}{0.32\textwidth}
        \includegraphics[width=\textwidth]{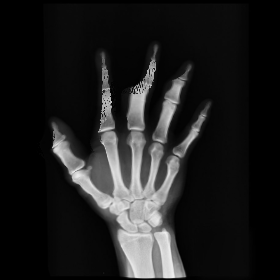}
        \caption{Result by LDDMM~\cite{lddmm}}
        \label{fig: hand lddmm}
    \end{subfigure}
    \hfill
    \begin{subfigure}{0.32\textwidth}
        \includegraphics[width=\textwidth]{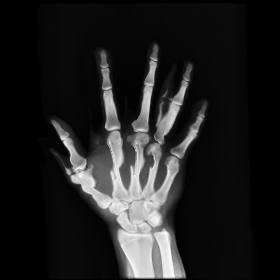}
        \caption{Result by DDemons~\cite{demons}}
        \label{fig: hand ddemons}
    \end{subfigure}
    \hfill
    \begin{subfigure}{0.32\textwidth}
        \includegraphics[width=\textwidth]{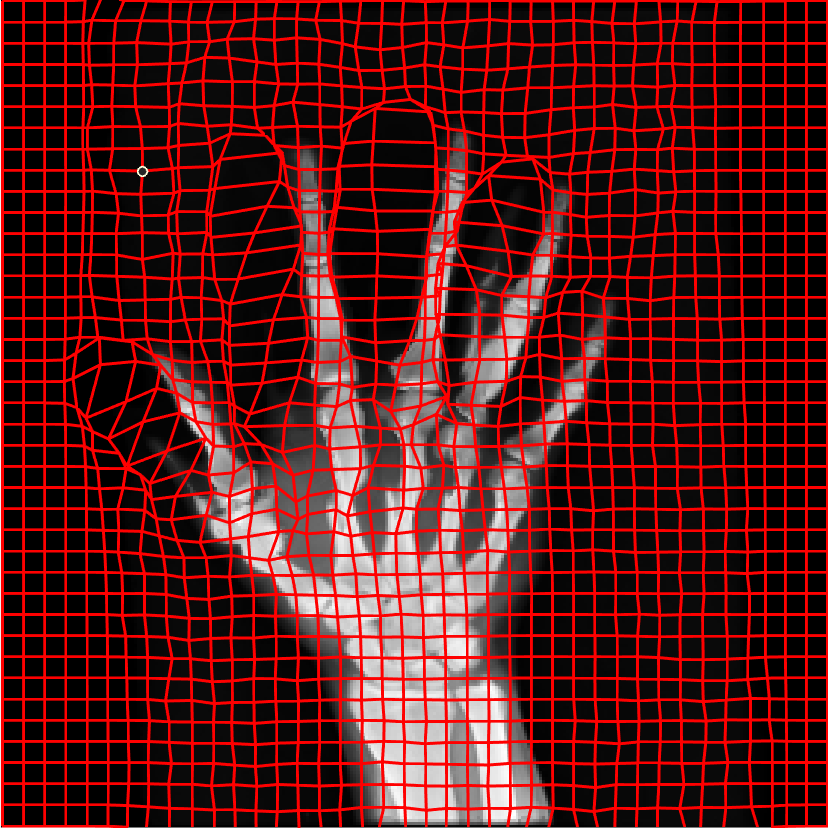}
        \caption{DDemons deformed grid}
        \label{fig: hand ddemons grid}
    \end{subfigure}
    \caption{The first hand X-ray example.}
    \label{fig:hand1}
\end{figure}

\subsection{Real X-ray images}
After testing our proposed method on several synthetic images, we now consider applying it on real medical images. Here, we consider a hand X-ray image as the source image (Fig.~\ref{fig: hand source image}) and a deformed hand X-ray image as the target image (Fig.~\ref{fig: hand target image}). Fig.~\ref{fig: hand initial diff} shows the original absolute intensity difference between the two images. It can be observed that different fingers are displaced in a nonuniform manner (for example, the displacement of the index finger is much larger than that of the little finger), while the wrist remains almost the same. Therefore, a simple rigid transformation is insufficient for yielding a good registration. As shown in Fig.~\ref{fig: hand warped image}, our proposed method successfully deforms the source image to match the target image, and the final intensity difference is significantly smaller (see Fig.~\ref{fig: hand final diff}). From the deformed underlying grid in Fig.~\ref{fig: hand warped image with grid}, it can be observed that the mapping is smooth and bijective. For comparison, both LDDMM~\cite{lddmm} and DDemons~\cite{demons} fail to register the fingers and are non-bijective (see Fig.~\ref{fig: hand lddmm}, Fig.~\ref{fig: hand ddemons}, and Fig.~\ref{fig: hand ddemons grid}). 

We then consider another example of registering two hand X-ray images with larger deformations (see Fig.~\ref{fig: second hand source image} for the source image, Fig.~\ref{fig: second hand target image} for the target image, and Fig.~\ref{fig: second hand initial diff} for their absolute intensity difference). The warped image produced by our proposed method (Fig.~\ref{fig: second hand warped image}) again closely resembles the target image with the intensity difference significantly reduced (see Fig.~\ref{fig: second hand final diff}). Fig.~\ref{fig: second hand warped image with grid} shows that the mapping is smooth and bijective. For comparison, note that LDDMM~\cite{lddmm} fails to match the fingers (Fig.~\ref{fig: second hand lddmm}). While DROP~\cite{glocker2011deformable} is capable of registering the fingers (Fig.~\ref{fig: second hand drop}), it distorts the boundary shape of the overall image (Fig.~\ref{fig: second hand drop grid}).

\begin{figure}[t!]
    \centering
    \begin{subfigure}[h]{0.32\textwidth}
        \centering
        \includegraphics[width=\textwidth]{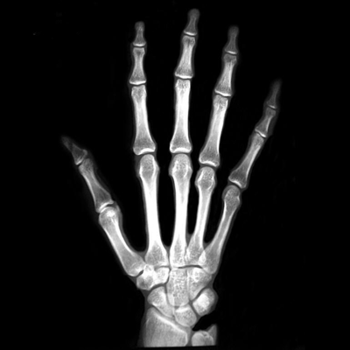}
        \caption{Source image}
        \label{fig: second hand source image}
    \end{subfigure}
    \hfill
    \begin{subfigure}[h]{0.32\textwidth}
        \centering
        \includegraphics[width=\textwidth]{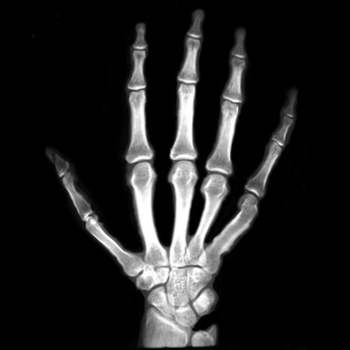}
        \caption{Target image}
        \label{fig: second hand target image}
    \end{subfigure}
    \hfill
    \begin{subfigure}[h]{0.32\textwidth}
        \centering
        \includegraphics[width=\textwidth]{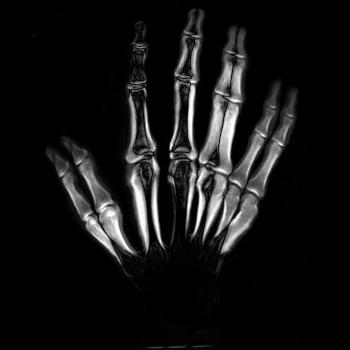}
        \caption{Intensity difference}
        \label{fig: second hand initial diff}
    \end{subfigure}
    \hfill
    \begin{subfigure}[h]{0.32\textwidth}
        \centering
        \includegraphics[width=\textwidth]{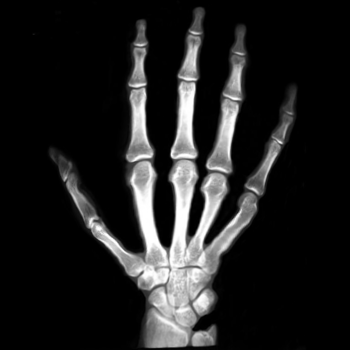}
        \caption{Our registration result}
        \label{fig: second hand warped image}
    \end{subfigure}
    \hfill
    \begin{subfigure}[h]{0.32\textwidth}
        \centering
        \includegraphics[width=\textwidth]{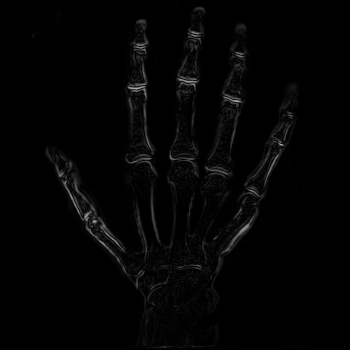}
        \caption{Final intensity difference}
        \label{fig: second hand final diff}
    \end{subfigure}
    \hfill
    \begin{subfigure}[h]{0.32\textwidth}
        \centering
        \includegraphics[width=\textwidth]{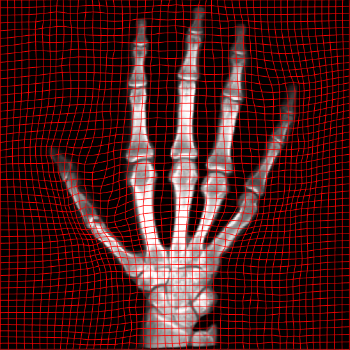}
        \caption{Warped image with grid}
        \label{fig: second hand warped image with grid}
    \end{subfigure}
    \hfill
    \begin{subfigure}{0.32\textwidth}
        \includegraphics[width=\textwidth]{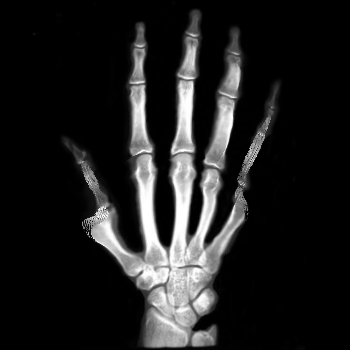}
        \caption{Result by LDDMM~\cite{lddmm}}
        \label{fig: second hand lddmm}
    \end{subfigure}
    \hfill
   \begin{subfigure}{0.32\textwidth}
        \includegraphics[width=\textwidth]{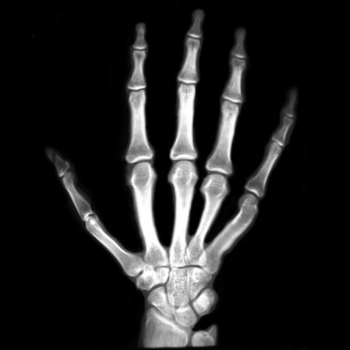}
        \caption{Result by DROP~\cite{glocker2011deformable}}
        \label{fig: second hand drop}
    \end{subfigure}
    \hfill
   \begin{subfigure}{0.32\textwidth}
        \includegraphics[width=\textwidth]{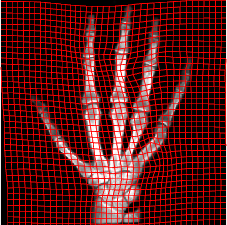}
        \caption{DROP deformed grid}
        \label{fig: second hand drop grid}
    \end{subfigure}
    \caption{The second hand X-ray example.}
    \label{fig:hand2}
\end{figure}

\begin{figure}[t!]
    \centering
    \begin{subfigure}[h]{0.32\textwidth}
        \centering
        \includegraphics[width=\textwidth]{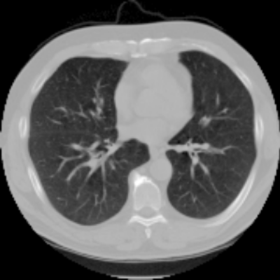}
        \caption{Source image}
        \label{fig: first lung source image}
    \end{subfigure}
    \hfill
    \begin{subfigure}[h]{0.32\textwidth}
        \centering
        \includegraphics[width=\textwidth]{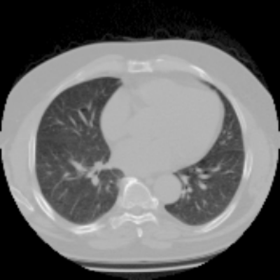}
        \caption{Target image}
        \label{fig: first lung target image}
    \end{subfigure}
    \hfill
    \begin{subfigure}[h]{0.32\textwidth}
        \centering
        \includegraphics[width=\textwidth]{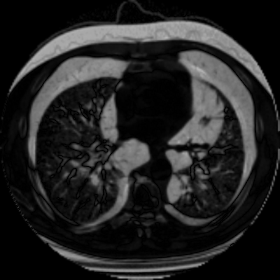}
        \caption{Intensity difference}
        \label{fig: first lung initial diff}
    \end{subfigure}
    \hfill
    \begin{subfigure}[h]{0.32\textwidth}
        \centering
        \includegraphics[width=\textwidth]{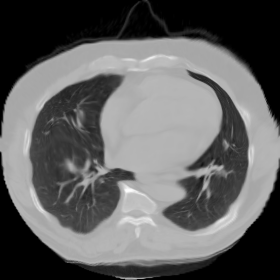}
        \caption{Our registration result}
        \label{fig: first lung warped image}
    \end{subfigure}
    \hfill
    \begin{subfigure}[h]{0.32\textwidth}
        \centering
        \includegraphics[width=\textwidth]{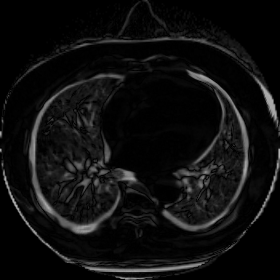}
        \caption{Final intensity difference}
        \label{fig: first lung final diff}
    \end{subfigure}
    \hfill
    \begin{subfigure}[h]{0.32\textwidth}
        \centering
        \includegraphics[width=\textwidth]{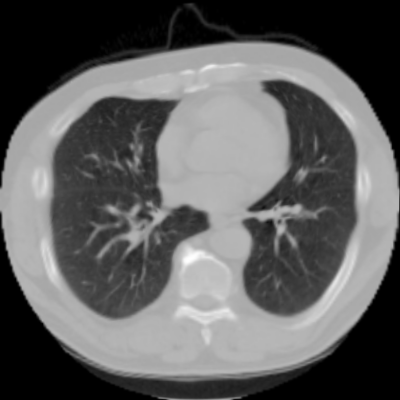}
        \caption{Result by DDemons~\cite{demons}}
        \label{fig: first lung demon result}
    \end{subfigure}
    \hfill
    \begin{subfigure}[h]{0.32\textwidth}
        \centering
        \includegraphics[width=\textwidth]{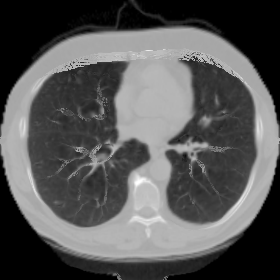}
        \caption{Result by LDDMM~\cite{lddmm}}
        \label{fig: first lung lddmm result}
    \end{subfigure}
    \hfill
    \begin{subfigure}[h]{0.32\textwidth}
        \centering
        \includegraphics[width=\textwidth]{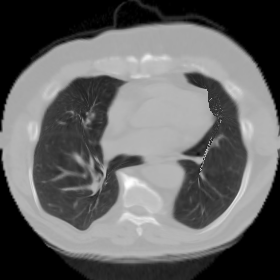}
        \caption{Result by Elastix~\cite{klein2009elastix}}
        \label{fig: first lung elastix result}
    \end{subfigure}
    \hfill
    \begin{subfigure}[h]{0.32\textwidth}
        \centering
        \includegraphics[width=\textwidth]{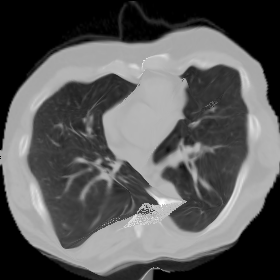}
        \caption{Result by DROP~\cite{glocker2011deformable}}
        \label{fig: first lung drop result}
    \end{subfigure}
    \caption{The lung CT example.}
    \label{fig:lung1}
\end{figure}

\begin{figure}[t!]
    \centering
    \begin{subfigure}[h]{0.32\textwidth}
        \centering
        \includegraphics[width=\textwidth]{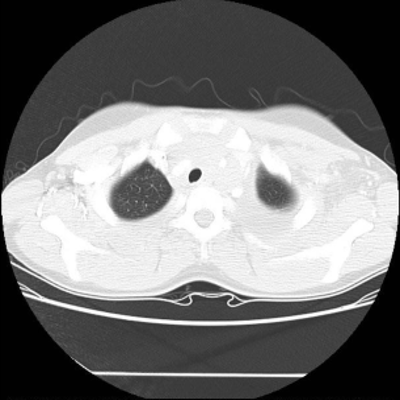}
        \caption{Source image}
        \label{fig: second lung source image}
    \end{subfigure}
    \begin{subfigure}[h]{0.32\textwidth}
        \centering
        \includegraphics[width=\textwidth]{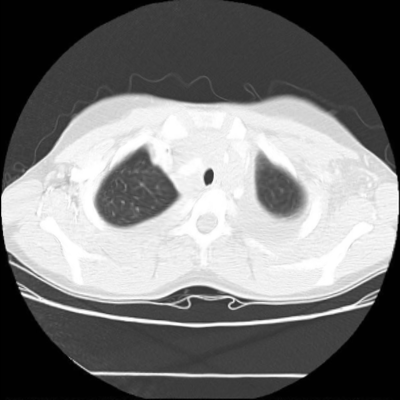}
        \caption{Target image}
        \label{fig: second lung target image}
    \end{subfigure}
    \begin{subfigure}[h]{0.32\textwidth}
        \centering
        \includegraphics[width=\textwidth]{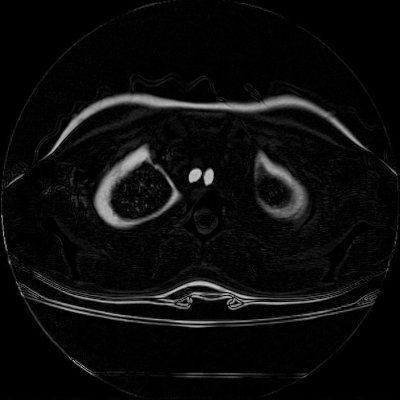}
        \caption{Intensity difference}
        \label{fig: second lung initial diff}
    \end{subfigure}
    \begin{subfigure}[h]{0.32\textwidth}
        \centering
        \includegraphics[width=\textwidth]{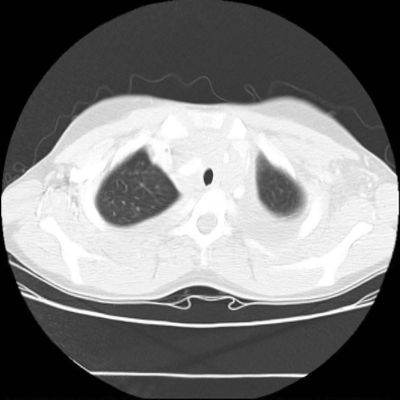}
        \caption{Our registration result}
        \label{fig: second lung warped image}
    \end{subfigure}
    \begin{subfigure}[h]{0.32\textwidth}
        \centering
        \includegraphics[width=\textwidth]{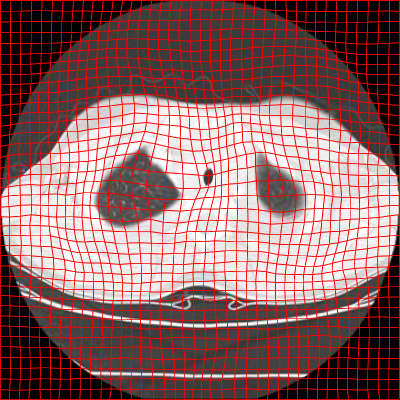}
        \caption{Warped image with grid}
        \label{fig: second lung warped image with grid}
    \end{subfigure}
    \begin{subfigure}[h]{0.32\textwidth}
        \centering
        \includegraphics[width=\textwidth]{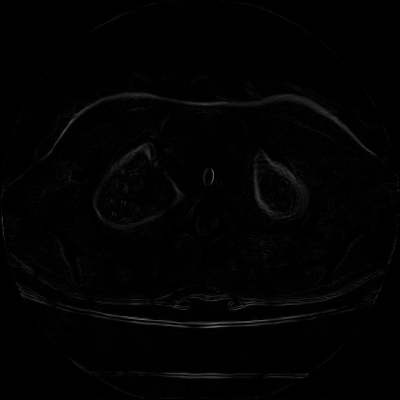}
        \caption{Final intensity difference}
        \label{fig: second lung final diff}
    \end{subfigure}
    \begin{subfigure}[h]{0.32\textwidth}
        \centering
        \includegraphics[width=\textwidth]{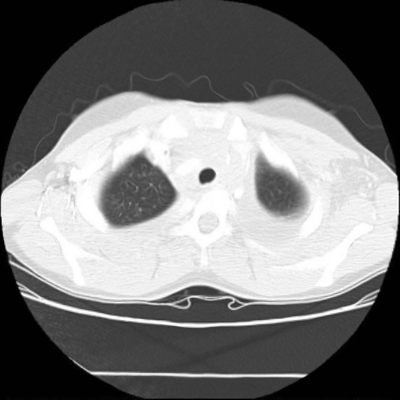}
        \caption{The result by DDemons~\cite{demons}}
        \label{fig: second lung demon}
    \end{subfigure}
    \begin{subfigure}[h]{0.32\textwidth}
        \centering
        \includegraphics[width=\textwidth]{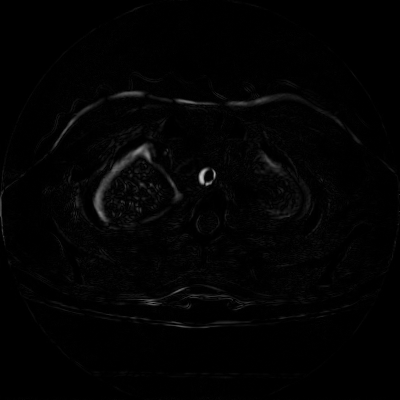}
        \caption{DDemons intensity difference}
        \label{fig: second lung demon absolute diff}
    \end{subfigure}
    \caption{The chest CT example.}
    \label{fig:lung2}
\end{figure}

\subsection{Real CT images}
We now consider applying the proposed image registration method on real lung CT images retrieved from the National Lung Screening Trial (NLST) dataset~\cite{nlst}. Fig.~\ref{fig: first lung source image} and Fig.~\ref{fig: first lung target image} show two slices of lung CT images that we use as the source and the target (see Fig.~\ref{fig: first lung initial diff} for the absolute intensity difference). We remark that the CT images are originally with different intensity, and so we apply an intensity histogram matching before running the registration experiment. Fig.~\ref{fig: first lung warped image} shows the registration result obtained by our proposed method. It can be observed that our method successfully produces a large deformation on the right lung of the source image to match that of the target image (see also Fig.~\ref{fig: first lung final diff} for the final absolute intensity difference). On the contrary, DDemons~\cite{demons} (Fig.~\ref{fig: first lung demon result}), LDDMM~\cite{lddmm} (Fig.~\ref{fig: first lung lddmm result}), Elastix~\cite{klein2009elastix}  (Fig.~\ref{fig: first lung elastix result}) and DROP~\cite{glocker2011deformable} (Fig.~\ref{fig: first lung drop result}) all fail to produce an accurate and bijective registration result. This shows that our method is more capable of handling large deformation image registration. 

We then test our method on slices of chest CT images obtained from the Open Access Biomedical Image Search Engine~\cite{openi}. Fig.~\ref{fig: second lung source image} and~\ref{fig: second lung target image} show the source image and target image respectively, and the intensity difference is shown in Fig.~\ref{fig: second lung initial diff}. The registration result obtained by our proposed method is shown in Fig.~\ref{fig: second lung warped image} (see also the result with the deformed underlying grid in Fig.~\ref{fig: second lung warped image with grid}). From the final intensity difference plot in Fig.~\ref{fig: second lung final diff}, it is easily to see that our method matches not only the two large components but also the small dot at the center very well. On the contrary, DDemons~\cite{demons} produces a suboptimal registration result with a significantly larger mismatch of the small component at the center (see Fig.~\ref{fig: second lung demon} and Fig.~\ref{fig: second lung demon absolute diff}).

\subsection{Quantitative comparison between different image registration methods}
For a more quantitative analysis, we compare our proposed image registration method with DDemons~\cite{demons}, LDDMM~\cite{lddmm}, Elastix~\cite{klein2009elastix}, and DROP~\cite{glocker2011deformable} in terms of the accuracy, smoothness, overall quality, and bijectivity. Here, to assess the accuracy of a registration map $f: \mathcal{M} \to \mathcal{S}$, we consider the following similarity energy
\begin{equation}\label{eqt:esim}
\begin{split}
    E_{\text{sim}}(f) = & \frac{\sum \sum |I_{\mathcal{M}}-I_{\mathcal{S}}(f)|}{2}\left(\frac{1}{\sum \sum I_{\mathcal{M}}} + \frac{1}{\sum \sum (1-I_{\mathcal{M}})} \right. \\
    & \left. + \frac{1}{\sum \sum I_{\mathcal{S}}(f)} + \frac{1}{\sum \sum (1-I_{\mathcal{S}}(f))}\right).
\end{split}
\end{equation}
Note that $E_{\text{sim}} = 0$ if and only if $f(\mathcal{M}) = \mathcal{S}$, i.e. the registration matches the two images exactly. To assess the smoothness of $f$, we consider the following smoothness energy
\begin{equation}\label{eqt:esmooth}
    E_{\text{smooth}}(f) = \frac{1}{mn} \sqrt{\sum \sum \left(\left(\frac{\partial f}{\partial x}\right)^2 + \left(\frac{\partial f}{\partial y}\right)^2 \right)},
\end{equation}
where the size of $\mathcal{M}$ is $mn$. The total energy $E_{\text{total}}$ takes both the similarity and the smoothness into account for evaluating the overall quality of the registration:
\begin{equation}\label{eqt:etotal}
    E_{\text{total}} = E_{\text{sim}} + E_{\text{smooth}}.
\end{equation}
Lastly, the bijectivity can be assessed by counting the number of flips in the mapping result. More specifically, we divide every image into $h \times w$ small squares evenly, and further divide each small square into 2 triangles. After applying each registration method, we count the number of triangles with the orientation flipped. A registration is bijective if and only if there is no triangle flip.

Table~\ref{table: comparison} records the performance of the above-mentioned methods. It can be observed that our method always gives a bijective registration result, while the other methods may produce flips and hence are non-bijective for various examples. Also, our method achieves the lowest total energy among all methods for most examples. For the few examples that some other methods are with a lower total energy, their results all contain a large number of flips and hence are still undesirable. The experiments suggest that our proposed method is more desirable for large deformation image registration, with the bijectivity guaranteed and a good overall quality in terms of accuracy and smoothness achieved.

\begin{table}[t!]
\centering
\resizebox{\textwidth}{!}{%
\begin{tabular}{|c|c|c|c|c|c|c|c|}
\hline
\multirow{2}{*}{{\bf Method}} & \multicolumn{7}{c|}{{\bf Results ($E_{\text{sim}}$ / $E_{\text{smooth}}$ / $E_{\text{total}}$ / \# Flips)}} \\ \cline{2-8}
                &  `Z' to `2' (Fig.~\ref{fig:synthetic}) &  Eagle (Fig.~\ref{fig:eagle}) & Rabbit (Fig.~\ref{fig:rabbit}) & Hand 1 (Fig.~\ref{fig:hand1}) & Hand 2 (Fig.~\ref{fig:hand2}) & Lung (Fig.~\ref{fig:lung1}) & Chest (Fig.~\ref{fig:lung2})  \\  \hline
Our method &\begin{tabular}[c]{@{}l@{}}\textbf{0.3099} 0.4476\\ \textbf{0.7575} \textbf{0}\end{tabular} & \begin{tabular}[c]{@{}l@{}}\textbf{0.0916} 0.1558\\ \textbf{0.2474} \textbf{0}\end{tabular}    & \begin{tabular}[c]{@{}l@{}}0.1953 0.1436\\ 0.3389 \textbf{0}\end{tabular}  & \begin{tabular}[c]{@{}l@{}}0.1417 \textbf{0.1075}\\ \textbf{0.2492} \textbf{0}\end{tabular}    & \begin{tabular}[c]{@{}l@{}}\textbf{0.1317} 0.1536\\ \textbf{0.2853} \textbf{0}\end{tabular}    & \begin{tabular}[c]{@{}l@{}}\textbf{0.2435} 0.4774\\ 0.7210 \textbf{0}\end{tabular}    & \begin{tabular}[c]{@{}l@{}}\textbf{0.0368} 0.1414\\ \textbf{0.1783} \textbf{0}\end{tabular}   \\ \hline
DDemons~\cite{demons}         & \begin{tabular}[c]{@{}l@{}}2.0660 \textbf{0.0481}\\ 2.1141 121 \end{tabular} & \begin{tabular}[c]{@{}l@{}}0.3724 0.2470\\ 0.6194 3697\end{tabular}    & \begin{tabular}[c]{@{}l@{}}\textbf{0.1806} 0.1278\\ \textbf{0.3083} 174\end{tabular}  & \begin{tabular}[c]{@{}l@{}}0.4528 0.2651\\ 0.7180 6036\end{tabular}  & \begin{tabular}[c]{@{}l@{}}0.4273 0.2718\\ 0.6991 8925\end{tabular}    & \begin{tabular}[c]{@{}l@{}}0.6332 \textbf{0.2271}\\ 0.8602 4191\end{tabular} & \begin{tabular}[c]{@{}l@{}}0.0725 0.3565\\ 0.4290 16316\end{tabular}  \\ \hline
LDDMM~\cite{lddmm}           & \begin{tabular}[c]{@{}l@{}}2.0488 0.2941\\ 2.3629 105\end{tabular}& \begin{tabular}[c]{@{}l@{}}0.3570 0.2388\\ 0.5968 26\end{tabular} & \begin{tabular}[c]{@{}l@{}}0.5973 \textbf{0.0738}\\ 0.6711 1\end{tabular} & \begin{tabular}[c]{@{}l@{}}0.7018 0.1842\\ 0.8860 \textbf{0}\end{tabular} & \begin{tabular}[c]{@{}l@{}}0.9563 0.1902\\ 1.1465 6\end{tabular} & \begin{tabular}[c]{@{}l@{}}0.6843 0.2568\\ 0.9410 209\end{tabular} & \begin{tabular}[c]{@{}l@{}}0.2446 \textbf{0.0717}\\ 0.3164 \textbf{0}\end{tabular} \\ \hline
Elastix~\cite{klein2009elastix}         & \begin{tabular}[c]{@{}l@{}}3.6377 0.9185\\ 4.5562 68280\end{tabular} & \begin{tabular}[c]{@{}l@{}}0.1324 0.1779\\ 0.3103 1158\end{tabular} & \begin{tabular}[c]{@{}l@{}}0.1808 0.1679\\ 0.3487 \textbf{0}\end{tabular}  & \begin{tabular}[c]{@{}l@{}}\textbf{0.1103} 0.1513\\ 0.2617 \textbf{0}\end{tabular}    & \begin{tabular}[c]{@{}l@{}}0.1856 \textbf{0.1406}\\ 0.3263 \textbf{0}\end{tabular}    & \begin{tabular}[c]{@{}l@{}}0.3432 0.3203\\ \textbf{0.6635} 3579\end{tabular} & \begin{tabular}[c]{@{}l@{}}0.0530 0.1408\\ 0.1938 \textbf{0}\end{tabular}   \\ \hline
DROP~\cite{glocker2011deformable}            & \begin{tabular}[c]{@{}l@{}}2.0554 0.3084\\ 2.3638 2515\end{tabular}& \begin{tabular}[c]{@{}l@{}}0.2509 \textbf{0.1518}\\ 0.4027 \textbf{0}\end{tabular}    & \begin{tabular}[c]{@{}l@{}}0.1788 0.1634\\ 0.3423 \textbf{0}\end{tabular}  & \begin{tabular}[c]{@{}l@{}}0.3021 0.1186\\ 0.4207 \textbf{0}\end{tabular}    & \begin{tabular}[c]{@{}l@{}}1.2514 \textbf{0.1406}\\ 1.3920 \textbf{0}\end{tabular}    & \begin{tabular}[c]{@{}l@{}}0.5065 0.6193\\ 1.1259 3663\end{tabular} & \begin{tabular}[c]{@{}l@{}}0.2415 0.1841\\ 0.4256 \textbf{0}\end{tabular}   \\ \hline
\end{tabular}
}
\caption{The performance of different image registration methods for various synthetic and real medical images. Here, $E_{\text{sim}}$ measures the accuracy of the registration mapping as described in Equation~\eqref{eqt:esim}, $E_{\text{smooth}}$ measures the smoothness of the mapping as described in Equation~\eqref{eqt:esmooth}, $E_{\text{total}}$ measures the overall quality of the mapping as described in Equation~\eqref{eqt:etotal}, and the number of flips reflects the bijectivity of the mapping. For each example and each measure, the best entry among all methods is in bold.}
\label{table: comparison}
\end{table}

\section{Conclusion and future works}\label{sect:conclusion}
In this work, we have proposed a novel method for large deformation image registration by taking the advantage of the mathematical properties of quasiconformal maps and the robustness of CNNs. With a novel fidelity term based on features extracted using a pre-trained classification CNN included in our proposed energy model, we are able to obtain meaningful descent directions for registering two images without imposing any landmark constraints. We also use quasiconformal theory for regularizing the registration process to ensure the bijectivity and reduce the local geometric distortion of the resulting mappings. Our experiments have shown that the proposed method outperforms the existing methods for the registration of both synthetic images and real medical images.

A natural next step is to consider improving the performance of the registration using a more data-specific network. Currently, the CNN features are extracted using a common pre-trained classification network so that the proposed model can handle various types of images well. However, in many applications, the users are only interested in registering a specific type of images. For instance, ophthalmologists may only perform image registration on retina scans; pulmonologists may focus on registering lung scans of the patients; and neurologists may be most interested in registering brain MRI scans. By pre-training a more data-specific network, we can better include the prior information of a specific type of images and hence further improve the performance of our registration method for them.

Another possible future direction is to extend the proposed method for other registration problems. Using conformal parameterization algorithms for meshes~\cite{choi2015fast,choi2018linear,choi2020parallelizable} or point clouds~\cite{meng2016tempo,liu2020free}, we can effectively flatten two 3D mesh- or point-based surfaces $\mathcal{M}, \mathcal{S}$ onto the 2D plane with low geometric distortion (denote the mappings by $\varphi_{\mathcal{M}}:\mathcal{M} \to \mathbb{C}$ and $\varphi_{\mathcal{S}}:\mathcal{S} \to \mathbb{C}$). Then, we can devise a similar energy minimization model for obtaining a quasiconformal map $f$ between the flattened surfaces without imposing any landmark constraints. The composition map $\varphi_{\mathcal{S}}^{-1} \circ f \circ \varphi_{\mathcal{M}}$ will then give a meaningful registration between the two surfaces $\mathcal{M}, \mathcal{S}$. As manual landmark labeling is not needed, such a surface registration method may largely facilitate 3D shape morphometry~\cite{choi2020tooth}. With the increasing importance of 3D image registration, it will also be interesting to consider combining 3D quasi-conformal maps~\cite{lee2016landmark,zhang20203d} and CNN features. These ideas will be further explored and validated in our future work.

\bibliographystyle{AIMS.bst}
\bibliography{mybib.bib}

\end{document}